\newtheorem{theorem}{Theorem}
\newtheorem{definition}{Definition}
\newtheorem{proposition}{Proposition} 
\newtheorem{example}{Example} 
\newenvironment{proof}{\noindent \textit{Proof:}}{\hfill $\square$}
\newcommand{\EL}{\ensuremath{\mathcal{EL}}}
\newcommand{\ELext}{\ensuremath{\mathcal{ELU}}}
\newcommand{\ALC}{\ensuremath{\mathcal{ALC}}}
\newcommand{\K}{\ensuremath{\mathcal{K}}}
\newcommand{\NC}{\ensuremath{\mathcal{N}_C}}
\newcommand{\NR}{\ensuremath{\mathcal{N}_R}}
\newcommand{\C}[1]{\ensuremath{\mathsf{C}({#1})}}
\newcommand{\Cn}[1]{\ensuremath{\mathsf{Cn}({#1})}}
\newcommand{\nat}{\ensuremath{\mathbb{N}}}
\newcommand{\ModAll}[1]{\ensuremath{\mathrm{Mod}\left(#1\right)}}
\newcommand{\Sen}[1]{\ensuremath{\mathrm{Sen}\left(#1\right)}}
\newcommand{\I}{\ensuremath{\mathcal{I}}}
\newcommand{\Cmc}{\ensuremath{\mathcal{C}}}
\newcommand{\Smc}{\ensuremath{\mathcal{S}}}
\newcommand{\isdraft}{\boolean{true}} % if this is a draft version
\newcommand{\markupdraft}[2]{% {#1: {color|display} command}{#2: desired color or text}
%  the next lines can be incommented, if respectively certain notes or coloring should disappear
    \ifthenelse{\equal{#1}{display}}{#2}{}%                 % display only in draft version
    \ifthenelse{\equal{#1}{color}}{\color{#2}}{}%           % colored only in draft (for \new command)
}
\newcommand{\newcolored}[3][]{{\markupdraft{color}{#2}#3}%  % kept in the final print
    \ifthenelse{\equal{#1}{}}{}{\markupdraft{display}{{\color{yellow!70!black}[#1]}}}} 
\newcommand{\new}[1]{#1}
\ifthenelse{\isdraft}{}{\renewcommand{\markupdraft}[2]{}}
\def\presuper#1#2%
\title{Relaxation-based revision operators in description logics}
\author{Marc Aiguier$^1$, Jamal Atif$^2$, Isabelle Bloch$^3$ and C\'eline Hudelot$^1$ \\
1. Centrale Sup\'elec, MAS, France, {\it \{marc.aiguier,celine.hudelot\}@centralesupelec.fr} \\
2. PSL, Universit\'e Paris-Dauphine, LAMSADE, UMR 7243, France, {\it jamal.atif@dauphine.fr} \\
3. Institut Mines-Telecom, Telecom ParisTech, CNRS LTCI, France, {\it isabelle.bloch@telecom-paristech.fr}}
\begin{document}

\maketitle

\begin{abstract}
 {\small As ontologies and description logics (DLs) reach out to a broader audience, several reasoning services are developed in this context. 
Belief revision is one of them, of prime importance when knowledge is prone to change and inconsistency. In this paper we  address both the generalization of the well-known AGM postulates, and the definition of concrete and well-founded revision operators in different DL families. 
We introduce a model-theoretic version of the AGM postulates with a general definition of inconsistency, hence enlarging their scope to a wide family of non-classical logics, in particular negation-free DL families. 
We propose a general framework for defining revision operators based on the notion of relaxation, introduced recently for defining dissimilarity measures between DL concepts. A revision operator in this framework amounts to relax the set of models of the old belief until it reaches the sets of models of the new piece of knowledge. 
We demonstrate that such a relaxation-based revision operator defines a faithful assignment and satisfies the generalized AGM postulates. 
Another important contribution concerns the definition of several concrete relaxation operators suited to the syntax of some DLs ($\ALC{}$ and its fragments $\EL{}$ and $\ELext{}$). \\
{\bf Keywords:} Revision in DL, AGM theory, relaxation, retraction.}

  \end{abstract}
  
\section{Introduction}
\label{sec:introduction}
Belief revision is at the core of artificial intelligence and philosophy questionings. It is defined as the process of changing an agent belief with a new acquired  knowledge. Three change operations are usually considered: {\em  expansion, contraction} and {\em revision}. 
%Belief expansion refers to the process of adding the  agent's belief with a new acquired one without checking inconsistency, contraction is the process of removing a set of beliefs from the old agent's ones, and revision refers 
We focus here on the revision, i.e. the process of adding {\bf consistently} the new belief sets. 
Belief revision has been intensively studied in classical logics (e.g. propositional logic) mostly under the prism of AGM theory~\cite{AGM-85}. With the growing interest in non-classical logics, such as Horn Logics and Description Logics~\cite{Baad03nocrossref}, several attempts to generalize AGM theory, making it compliant to the meta-logical flavor of these logics, have been introduced recently~\cite{flouris2005applying,delgrande2015,ribeiro2013,ribeiro2014}.  

In this paper, we are interested in defining concrete revision operators in Description Logics (DLs). DLs  are now pervasive in many knowledge-based representation systems, such as ontological reasoning, semantic web, scene understanding, cognitive robotics, to mention a few. In all these domains, the expert knowledge is rather a flux evolving through time, requiring hence the definition of rational revision operators. Revision is then a cornerstone in ontology engineering life-cycle where the expert knowledge is prone to change and inconsistency. This paper contributes to the effort of defining such  rational revision operators, in line with the recent art of the domain~\cite{qi2006,qi2006revision,flouris2005applying,flouris2006inconsistencies}. 
%This paper is organized as follows. 
In Section~\ref{sec:Preliminaries} 
%provides definitions  about Description Logics and AGM theory. In particular 
we discuss the adaptation of AGM theory to non-classical logics, including DLs, and introduce, as a first contribution,  a model-theoretic rewriting of AGM postulates. In Section~\ref{sec:theoryrelaxation}, we introduce our general framework of relaxation-based revision operators. As a second contribution, we demonstrate that they satisfy the AGM postulates and lead to a faithful assignment. Our third contribution is detailed in Section~\ref{sec:concrete}, by providing concrete theory relaxation operators in different DLs (namely $\ALC{}$ and its fragments $\EL{}$ and $\ELext{}$). Section~\ref{sec:related} positions our contributions with regards to the literature and finally Section~\ref{sec:conc} draws some conclusions and perspectives.

%Belief revision: Let $T$ and $T'$ be two knowledge bases, belief revision refers to the process of adding consistently the new piece of knowledge $T'$ to the old knowledge $T$. 

\section{Preliminaries}
\label{sec:Preliminaries}
\subsection{Description Logics}
\label{sec:DL}
%We do not give a complete introduction to description logics (DLs), for more
%information consider~\cite{Baad03nocrossref}.  
Description logics are a family
of knowledge representation formalisms (see e.g.~\cite{Baad03nocrossref} for more details). We consider, in this paper, the logic $\ALC{}$ and its fragments $\EL{}$ and $\ELext{}$. In the following we provide the syntax and semantics of $\ALC{}$, from which those of $\EL{}$ and $\ELext{}$ are easily deducible.  Signatures in DLs are triplets $(N_C,N_R,I)$ where $N_C$, $N_R$ and $I$ are nonempty pairwise disjoint sets such that elements in $N_C$, $N_R$ and $I$ are {\em concept names}, {\em role names} and {\em individuals}, respectively. Given a signature $\Sigma \in \mathrm{Sign}$, $\Sen{\Sigma}$ contains all the sentences of the form $C \sqsubseteq D$, $x:C$ and $(x,y):r$ where $x,y \in I$, $r \in N_R$ and $C$ is an $\ALC{}$-concept inductively defined from $N_C$ and binary and unary operators in $\{\_ \sqcap \_,\_ \sqcup \_\}$ and in $\{\_^c,\forall r.\_,\exists r.\_\}$, respectively. The set of  \emph{concept descriptions} provided by $\Sigma$ is denoted by $\C{\Sigma}$. The semantics of concept descriptions is
defined using interpretations. An \emph{interpretation} $\I$ is a pair $\I =
(\Delta_\I, \cdot^\I)$  consisting of an interpretation domain $\Delta_\I$ and
an interpretation function $\cdot^\I$ which maps concepts to subsets of the
domain $\Delta_\I$ and role names to binary relations on the domain.
A concept description $C$ is said to \emph{subsume} a concept description $D$
(denoted by $C \sqsubseteq D$) if $C^\I \subseteq D^\I$ holds for every
interpretation $\I$. Two concepts $C$ and $D$ are \emph{equivalent} (denoted by
$C \equiv D$) if both $C \sqsubseteq D$ and $D \sqsubseteq C$ hold. An interpretation $\I$ is a model of a $\Sigma$-sentence (TBox or ABox axiom) if it satisfies this sentence (e.g. $\I \models_\Sigma (C \sqsubseteq D)\text{~iff~}C^\I \subseteq D^\I$).

A DL knowledge base $T$ is a set of $\Sigma$-sentences (i.e. $T \subseteq \Sen{\Sigma}$). An interpretation $\I$ is a model of a DL knowledge base $T$ if it satisfies every sentence in $T$. 
In the following, we use $\ModAll\varphi$ (or $\ModAll{T}$) to denote the set of all the models of a $\Sigma$-sentence $\varphi$ (or DL knowledge base $T$). 
 A knowledge base is said to be a {\em theory} if and only if $T = \Cn{T}$, where $\Cn{}$ is the consequence operator defined as: $\Cn{T}=\{\varphi \in \Sen{\Sigma} \mid \forall \I \in \ModAll{T}, \I \models_\Sigma \varphi\}$ and satisfying {\em monotonicity}, {\em inclusion} and {\em idempotence}.  Hence DLs can be considered as Tarskian logics,  i.e. pairs $\langle \Sigma, \Cn{} \rangle$.% where $\Lmc$ is a set of formulae defined as $\Lmc= \bigcup_{\Sigma \in \mathrm{Sign}}\Sen{\Sigma}$.

Classically, consistency of a theory $T$ in DLs is defined as $\ModAll{T}\neq \emptyset$. Such a definition raised several issues in adapting revision  postulates to Description Logics (see~\cite{ribeiro2013,ribeiro2014}). We consider in this paper a more general definition of consistency the meaning of which is that there is at least a sentence which is not a semantic consequence: $T \subseteq \Sen{\Sigma}$ is {\em consistent} if $\Cn{T} \neq \Sen{\Sigma}$.

\subsection{AGM theory and Description Logics}
AGM theory~\cite{AGM-85} is probably the most influential paradigm in belief revision theory~\cite{gardenfors2003belief}. It provides, at an abstract logical level, a set of postulates  that a revision operator should satisfy so that the old belief  is changed minimally and rationally to become consistent with the new one. These postulates require the logic to be closed under negation and usual propositional connectives $(\vee, \wedge, \implies, \neg)$ which prevents its use in many non-classical logics, including DLs. Indeed, many DLs do not allow for negation of concepts (e.g. $\EL{}$) and {\em a fortiori} disjunction between TBox and ABox sentences is not defined in all DLs.   Recently, many papers have addressed the adaptation of AGM theory to non-classical logics, e.g.~\cite{flouris2005applying,ribeiro2013,delgrande2015,ribeiro2014}. The first efforts concentrated on the adaptation of contraction postulates, but more recently,~\cite{ribeiro2014} discussed the adaptation of revision postulates and introduced new minimality criteria, not necessarily related to the contraction  operator, throwing out the need for negation. However, one can find in~\cite{qi2006} an attempt to adapt the AGM revision postulates to DL in a model-theoretic way, following the seminal work of~\cite{KatsunoMendelzon91} that translated the AGM postulates in propositional logic semantics. The translation in~\cite{qi2006} is provided with the classical notion of consistency (a theory $T$ is consistent if $\ModAll{T}\neq 0$) which is not adequate to revision purposes (see~\cite{flouris2006inconsistencies} for a discussion). In this paper we  consider a model-theoretic translation of AGM postulates, similar to the ones in~\cite{qi2006}, with the notable difference that consistency is defined through the consequence operator as introduced in the previous section. This translation is in accordance with the postulates as introduced in~\cite{ribeiro2014}. 

Given two knowledge bases $T, T' \subseteq \Sen{\Sigma}$, $T \circ T'$ denotes the revision of the old belief $T$ by the new one $T'$. The model-theoretic translation of  AGM postulates writes:
\begin{description}
\item[(G1)] $\ModAll{T \circ T'} \subseteq \ModAll{T'}$.
\item[(G2)] If $T \cup T'$ is consistent, then $T \circ T' = T \cup T'$.
\item[(G3)] If $T'$ is consistent, then so is $T \circ T'$.
%\item[(G4)] If $\Cn{T_1} = \Cn{T'_1}$ and $\Cn{T_2} = \Cn{T'_2}$, then $\ModAll{T_1 \circ T_2} = \ModAll{T'_1 \circ T'_2}$. 
\item[(G4)] If $\ModAll{T_1} = \ModAll{T'_1}$ and $\ModAll{T_2} = \ModAll{T'_2}$, then $\ModAll{T_1 \circ T_2} = \ModAll{T'_1 \circ T'_2}$.
\item[(G5)]  $\ModAll{T \circ T'} \cap \ModAll{T''} \subseteq \ModAll{T \circ(T'\cup T'')}$.  
\item[(G6)] If $T \cup T' \cup T''$ is consistent, then $T \circ (T' \cup T'') = (T \circ T') \cup T''$. 
\end{description}

Besides these postulates, we consider a minimality criterion introduced in~\cite{ribeiro2014}:\\
{\bf (Relevance)} If $\varphi \in T \setminus (T \circ T')$, then there exists $X$, $T \cap (T \circ T') \subseteq X \subseteq T$, such that $\Cn{X \cup T'} \neq \Sen{\Sigma}$ and $\Cn{X \cup \{\varphi\} \cup T'} = \Sen{\Sigma}$.

A classical construction in belief theory is to characterize the revision operator in terms of {\em faithful assignments}~\cite{KatsunoMendelzon91,grove1988two}. We provide here a similar representation theorem for the postulates defined above. The proof can be found in~\cite{ARXIV}.
%\fbox{Ref Rapport technique ou Arxiv}.
%
\begin{definition}[Faithful assignment]
Let $T \subseteq \Sen{\Sigma}$ be a knowledge base. Let $\preceq_T \subseteq \ModAll{\Sigma} \times \ModAll{\Sigma}$ be a total pre-order. $\preceq_T$ is a {\em faithful assignment (FA)} if the following three conditions are satisfied:

\begin{enumerate}
\item If $\I,\I' \in \ModAll{T}$, $\I {\not \prec}_T \I'$.
\item For every $\I \in \ModAll{T}$ and every $\I' \in \ModAll{\Sigma} \setminus \ModAll{T}$, $\I \prec_T \I'$.
\item For every $T' \subseteq \Sen{\Sigma}$, if $\ModAll{T} = \ModAll{T'}$, then $\preceq_T = \preceq_{T'}$.
\end{enumerate} 
\end{definition}

\begin{theorem}
A revision operator $\circ$ satisfies the postulates {\bf (G1)-(G6)} if and only if for any DL knowledge base $T$, there exists a well-founded (i.e. the min is well defined) FA $\preceq_{T}$ such that $\ModAll{T \circ T'} = \min(\ModAll{T'} \setminus M^*,\preceq_{T})$, with $M^* = \{ \I \in \ModAll\Sigma \mid \{ \varphi \in \Sen\Sigma \mid \I \models_\Sigma \varphi \} = \Sen\Sigma \}$.
\end{theorem}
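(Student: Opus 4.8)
The plan is to prove both directions of the equivalence, after first normalizing the role of the universal models $M^*$. Since every $\I \in M^*$ satisfies every sentence, we have $M^* \subseteq \ModAll{T}$ for every knowledge base $T$, and in particular $M^* \subseteq \ModAll{T \circ T'}$; hence the displayed equality can only hold modulo $M^*$, and I read every model-set equality in the postulates and in the conclusion as an equality of the sets with $M^*$ removed (equivalently, as equality of the induced theories, since $\Cn{\mathcal{M}} = \Cn{\mathcal{M} \setminus M^*}$ for every set of interpretations $\mathcal{M}$). Unfolding the generalized consistency $\Cn{T} \neq \Sen{\Sigma}$ shows that $T$ is consistent if and only if $\ModAll{T} \setminus M^* \neq \emptyset$; thus $\ModAll{\Sigma} \setminus M^*$ plays the role of the genuine model space, and $\min(\ModAll{T'} \setminus M^*, \preceq_T)$ is automatically $M^*$-free. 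These reformulations turn the statement into a Katsuno--Mendelzon-style theorem over $\ModAll{\Sigma} \setminus M^*$.

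For the \emph{if} direction I would take a well-founded FA $T \mapsto \preceq_T$, define $\circ$ by $\ModAll{T \circ T'} \setminus M^* = \min(\ModAll{T'} \setminus M^*, \preceq_T)$, and verify the postulates. (G1) is immediate since the minimum is a subset of $\ModAll{T'}$. For (G2), consistency of $T \cup T'$ gives $(\ModAll{T} \cap \ModAll{T'}) \setminus M^* \neq \emptyset$; by FA conditions~1 and~2 the models of $T$ are exactly the global $\preceq_T$-minima, so the minimum over $\ModAll{T'} \setminus M^*$ is pinned to $(\ModAll{T} \cap \ModAll{T'}) \setminus M^* = \ModAll{T \cup T'} \setminus M^*$. (G3) follows from well-foundedness: a consistent $T'$ has $\ModAll{T'} \setminus M^* \neq \emptyset$, so its minimum is nonempty and $M^*$-free, whence $T \circ T'$ is consistent. (G4) is FA condition~3 together with the fact that $\ModAll{T_2} = \ModAll{T_2'}$ implies $\ModAll{T_2} \setminus M^* = \ModAll{T_2'} \setminus M^*$. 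Finally (G5) is monotonicity of $\min$ (an element minimal in a set and belonging to a subset is minimal in that subset), and (G6) follows by combining (G5) with the observation that, under consistency of $T \cup T' \cup T''$, both sides reduce to $(\ModAll{T} \cap \ModAll{T'} \cap \ModAll{T''}) \setminus M^*$ by the same pinning argument as for (G2). This direction is essentially routine.

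For the \emph{only if} direction I would, for each $T$, build $\preceq_T$ from the outputs of $\circ$: declare every model of $T$ to be $\preceq_T$-minimal (placing $M^*$ at the bottom, which is harmless since it is excluded from every minimum), and for the remaining models set $\I \preceq_T \J$ iff there is a $T'$ with $\I, \J \in \ModAll{T'} \setminus M^*$ and $\I \in \ModAll{T \circ T'}$. FA conditions~1 and~2 then follow from (G2): whenever $\I \in \ModAll{T}$ and $\J \notin \ModAll{T}$ lie together in some $\ModAll{T'} \setminus M^*$, consistency of $T \cup T'$ forces $\ModAll{T \circ T'} \setminus M^* = \ModAll{T \cup T'} \setminus M^*$, which contains $\I$ but not $\J$; FA condition~3 is immediate from (G4). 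Totality uses that any two models already sit in $\ModAll{\emptyset} \setminus M^*$ (or in a smaller definable set), so (G3) yields a nonempty output certifying a comparison, and well-foundedness likewise comes from (G3). The inclusion $\ModAll{T \circ T'} \setminus M^* \subseteq \min(\ModAll{T'} \setminus M^*, \preceq_T)$ is immediate from the definition, while the reverse inclusion follows once $\preceq_T$ is known to be a genuine preorder.

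The hard part will be establishing that $\preceq_T$ is transitive and that every $\preceq_T$-minimal model of $T'$ is actually selected by $\circ$ (the nontrivial half of the min-formula). Both rest on the conjunction postulates (G5) and (G6), which let one transport membership in a revision output from one input knowledge base to another: given a minimal $\I$ and some output model $\J_0$ guaranteed by (G3), I would compare the revision by $T'$ with the revision by a knowledge base isolating $\{\I, \J_0\}$ modulo $M^*$ and invoke (G5)/(G6) to conclude $\I \in \ModAll{T \circ T'}$. This is precisely where the argument implicitly needs enough \emph{definability} --- the ability to realize small (finite, modulo $M^*$) interpretation sets as some $\ModAll{T'}$ --- which is the genuine obstacle in negation-free DLs; I would either invoke this definability where available or restrict the quantification in the min-formula to definable model sets.
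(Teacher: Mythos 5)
The paper does not actually contain a proof of this theorem---it is deferred to the cited arXiv report---so your attempt can only be assessed on its own merits rather than compared step-by-step. Your opening normalization is sound and in fact necessary: every $\I \in M^*$ satisfies every sentence, so $M^* \subseteq \ModAll{T \circ T'}$ always holds, while $\min(\ModAll{T'} \setminus M^*, \preceq_T)$ is disjoint from $M^*$; hence the displayed equality can only be read modulo $M^*$, and consistency of $T$ is indeed equivalent to $\ModAll{T} \setminus M^* \neq \emptyset$. Under that reading your ``if'' direction (faithful assignment $\Rightarrow$ postulates) is essentially correct and routine, including the pinning argument for \textbf{(G2)} and the observation that the paper's \textbf{(G6)} reduces to it.

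The ``only if'' direction is where the proposal is incomplete, and the gap is genuine. First, the part you flag yourself: the Katsuno--Mendelzon construction needs, for arbitrary $\I, \J \in \ModAll{\Sigma} \setminus M^*$, a knowledge base whose models (modulo $M^*$) are exactly $\{\I, \J\}$. In the negation-free DLs this paper targets ($\EL$, $\ELext$), and even in $\ALC$, where interpretations can be infinite structures, such pairs are in general not axiomatizable; without them you cannot establish totality or transitivity of your relation $\preceq_T$, nor the inclusion $\min(\ModAll{T'} \setminus M^*, \preceq_T) \subseteq \ModAll{T \circ T'}$. Retreating to ``definable model sets'' would prove a different statement, not this theorem. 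Second, a problem you do not notice: your transport argument presupposes an (R6)-style postulate whose antecedent is consistency of $(T \circ T') \cup T''$. The paper's \textbf{(G6)} instead has antecedent ``$T \cup T' \cup T''$ is consistent''; by monotonicity of $\Cn{}$ that antecedent already makes $T \cup T'$ consistent, so \textbf{(G2)} forces both sides to equal $T \cup T' \cup T''$, i.e.\ \textbf{(G6)} as stated is a consequence of \textbf{(G2)} and is vacuous exactly in the interesting case where $T \cup T'$ is inconsistent. Consequently \textbf{(G5)} alone yields only one inclusion, and the step ``invoke (G5)/(G6) to conclude $\I \in \ModAll{T \circ T'}$'' cannot be carried out from the postulates as literally stated; any complete proof must either strengthen (or reinterpret) \textbf{(G6)} or supply the missing definability, which is precisely what the deferred proof would have to confront.
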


\section{Relaxation of theories and  associated revision operator}
\label{sec:theoryrelaxation}
The notion of relaxation has been introduced in~\cite{FD:ECAI-14,FD:KR-14} to define dissimilarity measures between DL concept descriptions. In this paper we generalize this notion to formula relaxation and subsequently to theory relaxation in order to define revision operators. 

\begin{definition}[Concept Relaxation~\cite{FD:KR-14}]
\label{def:ConceptRelaxation}
  A \emph{(concept) relaxation} is an operator $\rho \colon \C{\Sigma}
  \rightarrow \C{\Sigma}$ that satisfies the following two properties\footnote{The non-decreasingness property in the original definition is removed here, since it is not needed in our construction.} for all
  $C \in  \C{\Sigma}$.
  \begin{enumerate}
%    \item $\rho$ is \emph{non-decreasing}, i.e. $C \sqsubseteq D$ implies   $\rho(C) \sqsubseteq \rho(D)$,
    \item $\rho$ is \emph{extensive}, i.e.\ $C \sqsubseteq \rho(C)$, 
    \item $\rho$ is \emph{exhaustive}, i.e.\ $\exists k \in \nat_0 \colon
      \top \sqsubseteq \rho^k(C)$,
  \end{enumerate}
  where $\rho^k$ denotes $\rho$ applied $k$ times, and $\rho^0$ is the
  identity mapping.
\end{definition}

%Note that in its original definition, concept relaxation is defined as a non-decreasing, extensive and exhaustive operator. Here, we opt for removing the non-decreasingness property since it is not needed in our construction. 
%Henceforth, when concrete concept relaxation operators will be introduced in Section~\ref{sec:concrete}, they should be meant to be exhaustive and extensive.

Our idea to define revision operators is to relax the set of models of the old belief until it becomes consistent with the new pieces of knowledge. This is illustrated in Figure~\ref{fig:relaxation} where theories are represented as sets of their models. Intermediate steps to define the revision operators are then the definition of  formula  and theory relaxations.  The whole scheme of our framework is provided in Figure~\ref{fig:structure}. 

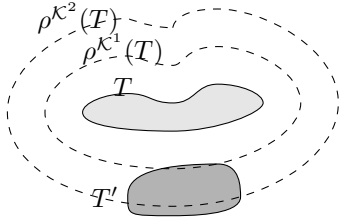
\begin{figure}[t]
  \newlength{\ddx}
  \newlength{\ddy}
  \begin{center}
    \begin{tikzpicture}
  \setlength{\ddx}{0.6cm}
  \setlength{\ddy}{.125cm}
  \newlength{\shift}
  \setlength{\shift}{.5cm}
  \newlength{\cshift}
  \setlength{\cshift}{.15cm}
  \path[draw, fill, fill opacity = 0.3] (\ddx, -6.5\ddy) ..
    controls (1.5\ddx, -6.5\ddy) and (1.5\ddx, -9\ddy)
    .. (1.5\ddx, -10\ddy) ..
    controls (1.5\ddx, -12\ddy) and (1.5\ddy, -12\ddy)
    .. (0, -12\ddy) ..
    controls (-\ddx, -12\ddy) and (-\ddx, -12\ddy)
    .. (-\ddx, -10\ddy) ..
    controls (-\ddx, -9\ddy) and (-\ddx, -6.5\ddy)
    .. (\ddx, -6.5\ddy);
  \node at (-1.5\ddx, -10\ddy) {$T'$};
  \path[draw, fill, fill opacity = 0.1] (0, 0) ..
    controls (.5\ddx, 0) and (.5\ddx, 2\ddy)
    .. (\ddx, 2\ddy) ..
    controls (1.5\ddx, 2\ddy) and (2\ddx, \ddy)
    .. (2\ddx, 0) ..
    controls (2\ddx, -\ddy) and (\ddx, -3\ddy)
    .. (0, -3\ddy) ..
    controls (-\ddx, -3\ddy) and (-2\ddx, -2.5\ddy)
    .. (-2\ddx, -\ddy) ..
    controls (-2\ddx, 0) and (-1.5\ddx, \ddy)
    .. (-\ddx, \ddy) ..
    controls (-.5\ddx, \ddy) and (-.5\ddx, 0)
    .. (0, 0);
  \node at (-1.1\ddx, 1.7\ddy) {$T$};
  \path[draw, dashed] (0, \shift) ..
    controls (.5\ddx - \cshift, \shift) and (.5\ddx - \cshift, 2\ddy + \shift)
    .. (\ddx, 2\ddy + \shift) ..
    controls (1.5\ddx + \cshift, 2\ddy + \shift) and
      (2\ddx + \shift, \ddy + 2\cshift)
    .. (2\ddx + \shift, 0) ..
    controls (2\ddx + \shift, -\ddy - 2\cshift) and
      (\ddx + 2\cshift, -3\ddy - \shift)
    .. (0, -3\ddy - \shift) ..
    controls (-\ddx - 2\cshift, -3\ddy - \shift) and
      (-2\ddx - \shift, -2.5\ddy - 2\cshift)
    .. (-2\ddx - \shift, -\ddy) ..
    controls (-2\ddx - \shift, 2\cshift) and
      (-1.5\ddx - \cshift, \ddy + \shift)
    .. (-\ddx, \ddy + \shift) ..
    controls (-.5\ddx - \cshift, \ddy + \shift) and (-.5\ddx, \shift)
    .. (0, \shift);
  \node at (-1.1\ddx, 1.7\ddy + \shift) {$\rho^{\K^1}(T)$};
  \setlength{\cshift}{.3cm}
  \setlength{\shift}{1cm}
  \path[draw, dashed] (0, \shift) ..
    controls (.5\ddx - \cshift, \shift) and (.5\ddx - \cshift, 2\ddy + \shift)
    .. (\ddx, 2\ddy + \shift) ..
    controls (1.5\ddx + \cshift, 2\ddy + \shift) and
      (2\ddx + \shift, \ddy + 2\cshift)
    .. (2\ddx + \shift, 0) ..
    controls (2\ddx + \shift, -\ddy - 2\cshift) and
      (\ddx + 2\cshift, -3\ddy - \shift)
    .. (0, -3\ddy - \shift) ..
    controls (-\ddx - 2\cshift, -3\ddy - \shift) and
      (-2\ddx - \shift, -2.5\ddy - 2\cshift)
    .. (-2\ddx - \shift, -\ddy) ..
    controls (-2\ddx - \shift, 2\cshift) and
      (-1.5\ddx - \cshift, \ddy + \shift)
    .. (-\ddx, \ddy + \shift) ..
    controls (-.5\ddx - \cshift, \ddy + \shift) and (-.5\ddx, \shift)
    .. (0, \shift);
  \node at (-2.1\ddx, 1\ddy + \shift) {$\rho^{\K^2}(T)$};
\end{tikzpicture}
  \end{center}
  \caption{Relaxations of $T$ until it becomes consistent with $T'$. \label{fig:relaxation}}
\end{figure}

\begin{definition}[Formula Relaxation]
Given a signature $\Sigma \in \mathrm{Sign}$, a {\em $\Sigma$-formula relaxation} is a mapping  $\rho_{\Sigma} \colon \Sen{\Sigma} \to \Sen{\Sigma}$ satisfying the following properties:
\begin{itemize}
%\item {\bf Non-decreasingness:} $\forall \varphi, \psi \in \Sen{\Sigma}, Mod(\varphi) \subseteq Mod(\psi)$ implies $Mod(\rho_{\Sigma}(\varphi)) \subseteq Mod(\rho_{\Sigma}(\psi))$.
\item Extensivity:  $\forall \varphi \in  \Sen{\Sigma}, \ModAll{\varphi} \subseteq \ModAll{\rho_{\Sigma}(\varphi)}$.
\item Exhaustivity: $\exists k\in \mathbb{N}, \ModAll{\rho_{\Sigma}^k(\varphi)}=\ModAll{\Sigma}$, where $\rho^k$denotes $\rho_{\Sigma}$ applied $k$ times, and $\rho_{\Sigma}^0$ is the identity mapping.
\end{itemize}
\end{definition}

\begin{definition}[$\Sigma$-theory relaxation]
\label{def:TR}
Let $T$ be a  theory, $T \in \mathcal{P}(\Sen{\Sigma}) $, $\rho_{\Sigma}$ a $\Sigma$-formula relaxation and a set $\K=\{k_\varphi \in \mathbb{N} \mid \varphi \in T \}$. Then a {\em  $\Sigma$-theory relaxation }  is  a mapping $\rho^{\mathcal{K}} \colon \mathcal{P}(\Sen{\Sigma})\rightarrow \mathcal{P}(\Sen{\Sigma})$ defined  as:
\[
\rho^\K(T)=\bigcup_{\varphi \in T} \rho^{k_\varphi}_\Sigma(\varphi).
\]
%with $k_\varphi \in \mathbb{N}, \forall \varphi \in T$. We note $\K=\{k_\varphi \in \mathbb{N} \mid \varphi \in T \}$.
\end{definition}

\begin{proposition}
$\rho^{\mathcal{K}}$
%satisfies
%\begin{itemize} 
is extensive ($\forall T \subseteq  \Sen{\Sigma}, \ModAll{T}\subseteq \ModAll{\rho^{\mathcal{K}}(T)}$),
and exhaustive ($\exists \K \subseteq \mathbb{N}, \ModAll{\rho^\K(T)}=\ModAll{\Sigma}$).
%\end{itemize}
\end{proposition}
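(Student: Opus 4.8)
The plan is to prove the two properties separately, in each case reducing the claim about theories to the corresponding property of the underlying formula relaxation $\rho_\Sigma$, via the observation that the models of a set of sentences are the intersection of the models of its members. Since a knowledge base is satisfied exactly when all of its sentences are, we have $\ModAll{T} = \bigcap_{\varphi \in T}\ModAll{\varphi}$; and because $\rho^\K(T) = \bigcup_{\varphi \in T}\rho_\Sigma^{k_\varphi}(\varphi)$ is again a set of $\Sigma$-sentences, an interpretation is a model of it precisely when it is a model of each $\rho_\Sigma^{k_\varphi}(\varphi)$, so $\ModAll{\rho^\K(T)} = \bigcap_{\varphi \in T}\ModAll{\rho_\Sigma^{k_\varphi}(\varphi)}$. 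These two identities are what let me transfer the single-formula properties through the union defining $\rho^\K$.

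For extensivity I would first iterate the single-step extensivity of $\rho_\Sigma$: applying $\ModAll{\psi} \subseteq \ModAll{\rho_\Sigma(\psi)}$ repeatedly yields, for every $\varphi \in T$ and every exponent $k_\varphi$, the chain $\ModAll{\varphi} \subseteq \ModAll{\rho_\Sigma(\varphi)} \subseteq \cdots \subseteq \ModAll{\rho_\Sigma^{k_\varphi}(\varphi)}$. Since intersection is monotone (if $A_\varphi \subseteq B_\varphi$ for all $\varphi$ then $\bigcap_\varphi A_\varphi \subseteq \bigcap_\varphi B_\varphi$), intersecting over $\varphi \in T$ gives $\ModAll{T} \subseteq \ModAll{\rho^\K(T)}$, for any choice of $\K$. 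For exhaustivity I would instead invoke the exhaustivity of $\rho_\Sigma$ to choose, for each $\varphi \in T$, an exponent $k_\varphi$ with $\ModAll{\rho_\Sigma^{k_\varphi}(\varphi)} = \ModAll{\Sigma}$; collecting these into $\K$ yields $\ModAll{\rho^\K(T)} = \bigcap_{\varphi \in T}\ModAll{\Sigma} = \ModAll{\Sigma}$. The degenerate case $T = \emptyset$ is treated separately, since then $\rho^\K(T) = \emptyset$ and $\ModAll{\emptyset} = \ModAll{\Sigma}$ by convention.

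I do not expect a genuine obstacle here, as the argument is essentially bookkeeping. The one point that demands care is the direction reversal between union and intersection: $\rho^\K$ is assembled as a union of relaxed sentences, yet its model set is the intersection of their model sets, so all inclusions must be tracked at the level of model sets rather than of syntactic theories. The only real asymmetry between the two halves is that exhaustivity requires selecting the exponents $k_\varphi$ per formula (and then assembling $\K$), whereas extensivity holds uniformly for every $\K$.
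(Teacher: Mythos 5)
Your proof is correct, and it matches the paper's (implicit) intent: the paper states this proposition without any proof, treating it as an immediate consequence of the definitions, and your reduction via $\ModAll{T}=\bigcap_{\varphi\in T}\ModAll{\varphi}$ and $\ModAll{\rho^\K(T)}=\bigcap_{\varphi\in T}\ModAll{\rho_\Sigma^{k_\varphi}(\varphi)}$, with iteration of single-step extensivity for the first claim and a per-formula choice of exponents $k_\varphi$ for the second, is exactly the straightforward bookkeeping argument being relied upon.
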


\begin{definition}[Relaxation-based  revision]
\label{revisionrelaxation}
Let $\rho^{\K}$ be a $\Sigma$-theory relaxation. We define the revision operator $\circ : \mathcal{P}(Sen(\Sigma)) \times \mathcal{P}(Sen(\Sigma)) \to \mathcal{P}(Sen(\Sigma))$ as follows: 
%$\forall T \subseteq Sen(\Sigma), \rho(T) = \{\rho(\varphi) \mid \varphi \in T\}$ and $\rho^k = \underset{\mbox{$k$ times}}{\underbrace{\rho \ldots \rho}}$
$$T_1 \circ T_2 = \rho^\K(T'_1)  \cup T''_1 \cup T_2$$
for a set $\K$ such that   $\rho^\K(T'_1) \cup T''_1 \cup T_2$ is consistent, and    $\forall \K' \text{ s.t. } \rho^{\K'}(T'_1) \cup T''_1 \cup T_2 \text{ is consistent}, \sum_{k \in \K} k \leq \sum_{k \in \K'} k$,  and $T_1 = T'_1 \coprod T''_1$ (disjoint union) such that:
\begin{enumerate}
\item $Cn(T'_1 \cup T_2) = Sen(\Sigma)$,
\item $Cn(T''_1 \cup T_2) \neq Sen(\Sigma)$,
\item $\forall T \text{~s.t.~} T''_1 \subset T \subseteq T_1, Cn(T \cup T_2) = Sen(\Sigma)$.
\end{enumerate}  
%with $\coprod$ the disjoint union of sets.
\end{definition} 

Partitioning $T_1$ into $T'_1$ and $T''_1$ is not unique
%. Indeed, other partitionings are also possible. The \
and the only constraint is that $T''_1$ is of maximal size.
%, i.e. adding any formula of $T'_1$ to $T''_1$ leads to inconsistency when adding to $T_2$. This is what is expressed by the three properties in Definition~\ref{revisionrelaxation}. 
The set $\K$ may not be unique either.

\begin{theorem}
\label{orders and relaxation}
From any  $\Sigma$-theory relaxation $\rho^{\mathcal{K}}$ and  every knowledge base $T \subseteq \Sen{\Sigma}$, the binary relation $\preceq_T \subseteq \ModAll{\Sigma} \times \ModAll{\Sigma}$ defined by $\mathcal{I} \preceq_T \mathcal{I'}$ if:
\[
\min_{\mathcal{K} \mid \mathcal{I} \in \ModAll{\rho^\mathcal{K}(T)}} \sum_{k \in \mathcal{K}} k \leq \min_{\mathcal{K'} \mid \mathcal{I'} \in \ModAll{\rho^\mathcal{K'}(T)}} \sum_{k \in \mathcal{K'}} k
\]
is a well-founded faithful assignment such that for every $T' \subseteq \Sen{\Sigma}$, 
%$\ModAll{T \circ T'} = \min(\ModAll{T'} \setminus \{\I \in \ModAll{\Sigma} \mid \I^* = \Sen{\Sigma}\},\preceq_T)$. 
$\ModAll{T \circ T'} = \min(\ModAll{T'} \setminus M^*,\preceq_T)$. 
\end{theorem}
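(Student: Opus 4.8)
The plan is to work entirely with the relaxation cost $c_T(\I):=\min\{\sum_{k\in\K}k \mid \I\in\ModAll{\rho^\K(T)}\}$, so that $\I\preceq_T\J$ iff $c_T(\I)\le c_T(\J)$, and to derive both assertions from elementary properties of $c_T$. First I would note that $c_T$ is well defined: by the exhaustivity half of the Proposition some $\K$ satisfies $\ModAll{\rho^\K(T)}=\ModAll\Sigma$, so the admissible set of budgets is nonempty, and as the sums lie in $\nat$ the minimum exists. Since $c_T$ is $\nat$-valued, $\preceq_T$ is at once reflexive, transitive and total, and every nonempty set of interpretations has an element of least cost, giving well-foundedness. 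I would also record the decoupling $\ModAll{\rho^\K(T)}=\bigcap_{\varphi\in T}\ModAll{\rho^{k_\varphi}_\Sigma(\varphi)}$, whence $c_T(\I)=\sum_{\varphi\in T}\min\{k\mid\I\in\ModAll{\rho^k_\Sigma(\varphi)}\}$, each summand being monotone in $k$ by extensivity.

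Next I would check the three faithful-assignment conditions. The crux is that the all-zero budget gives $\rho^\K(T)=T$ because $\rho^0_\Sigma$ is the identity, so $c_T(\I)=0$ exactly when $\I\in\ModAll{T}$ and $c_T(\I)\ge 1$ otherwise. Condition~1 and condition~2 are then immediate: two models of $T$ both have cost $0$, hence are $\preceq_T$-equivalent, while any model of $T$ (cost $0$) lies strictly below any non-model (cost $\ge1$). Condition~3, the invariance $\ModAll{T}=\ModAll{T'}\Rightarrow\preceq_T=\preceq_{T'}$, is delicate, since $c_T$ is built from the individual sentences of $T$; I would discharge it by taking, as in the Preliminaries, $T$ deductively closed, so that $\ModAll{T}=\ModAll{T'}$ forces $T=\Cn{T}=\Cn{T'}=T'$ (in full generality it requires the relaxation to depend only on $\ModAll{\cdot}$, which the decoupling shows can otherwise fail).

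Finally I would prove the revision identity. I would first recast the generalized consistency of the Preliminaries model-theoretically: $\Cn{S}\ne\Sen\Sigma$ iff $\ModAll{S}\not\subseteq M^*$, i.e.\ $S$ is consistent iff it has a non-absurd model, and moreover $M^*\subseteq\ModAll{S}$ for every $S$. Applied to $S=\rho^\K(T)\cup T'$ this says a budget $\K$ makes the revision consistent iff $\ModAll{\rho^\K(T)}\cap\ModAll{T'}$ contains a non-absurd interpretation. Two inequalities then identify the least admissible total budget with $c^*:=\min\{c_T(\I)\mid\I\in\ModAll{T'}\setminus M^*\}$: any consistent $\K$ yields such an $\I$ with $\sum_{k\in\K}k\ge c_T(\I)\ge c^*$, while the per-sentence optimal budget of a cost-$c^*$ model is itself consistent with sum $c^*$. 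It then remains to match the models: a non-absurd $\I\in\ModAll{T'}$ satisfies $\rho^{\K^*}(T)$ for some minimal $\K^*$ iff $c_T(\I)=c^*$, which is exactly membership in $\min(\ModAll{T'}\setminus M^*,\preceq_T)$.

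The main obstacle is this last matching. A single minimal budget need not put every cost-$c^*$ model inside $\ModAll{\rho^{\K^*}(T)}$, because distinct optimal models may attain $c^*$ by relaxing different sentences (the decoupling above makes this explicit); the two sides coincide only when $\K^*$ ranges over all minimal budgets---exactly the non-uniqueness flagged after Definition~\ref{revisionrelaxation}---using that a model's per-sentence optimum is globally minimal and consistency-preserving, hence one of those budgets. A secondary bookkeeping point is that $M^*\subseteq\ModAll{T\circ T'}$ always, so the stated equality must be read on the non-absurd parts $\ModAll{T\circ T'}\setminus M^*=\min(\ModAll{T'}\setminus M^*,\preceq_T)$; once these two subtleties and the closed-theory reading of condition~3 are settled, the remaining steps are routine monotonicity arguments from extensivity and exhaustivity.
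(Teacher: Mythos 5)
Your proposal follows the same core strategy as the paper's own proof: compare interpretations by the minimal total relaxation budget $c_T$, obtain totality and well-foundedness from $\nat$-valuedness and exhaustivity, get faithful-assignment conditions 1--2 from the zero-budget characterization $c_T(\I)=0 \iff \I\in\ModAll{T}$, and identify the optimal budget sum of $\circ$ with the minimal cost over $\ModAll{T'}\setminus M^*$. The genuine divergence is in the final matching step, and there your version is the sound one: the three ``obstacles'' you flag are precisely the points the paper glosses over. (a) In its converse direction the paper infers, from $\I$ being $\preceq_T$-minimal in $\ModAll{T'}\setminus M^*$, that $\I\in\ModAll{\rho^{\K}(T)\cup T'}$ for the \emph{particular} minimal budget $\K$ chosen by $\circ$; your decoupling identity $c_T(\I)=\sum_{\varphi\in T}\min\{k \mid \I\in\ModAll{\rho^k_\Sigma(\varphi)}\}$ shows exactly why this inference is invalid: two cost-$c^*$ models may spend the same total budget on different sentences, so for a fixed choice of $\K$ the stated equality can fail, and it is restored only by matching the min-set against the union of $\ModAll{\rho^{\K^*}(T)\cup T'}$ over \emph{all} minimal consistent budgets $\K^*$ (equivalently, by reading $\circ$ as nondeterministic over the choices flagged after Definition~\ref{revisionrelaxation}). (b) The $M^*$ bookkeeping is also correct: since absurd interpretations satisfy every sentence, $M^*\subseteq\ModAll{T\circ T'}$ while the right-hand side excludes $M^*$, so the equality can only be read on non-absurd parts; the paper's direction (i) silently ignores this. (c) The paper declares condition 3 of the faithful assignment ``obvious,'' but as you observe $c_T$ is syntax-dependent (adding a redundant sentence changes the costs), so the condition genuinely needs the deductively-closed reading of knowledge bases, under which $\ModAll{T}=\ModAll{T'}$ forces $T=T'$. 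In short, your proof has the same skeleton as the paper's but repairs its last step; what the repair buys is a statement that is actually true, at the price of reformulating the revision identity over all minimal budgets rather than one.
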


\begin{proof}
By construction, $\preceq_T$ is obviously a total pre-order. Well-foundness follows from exhaustivity. The two first conditions follow from the fact that $\I\in\ModAll{T} \iff  \min_{\K \mid  \mathcal{I} \in \ModAll{\rho^\mathcal{K}(T)}}\sum_{k \in \K} k =0 $. The third one is obvious.
% Let $\I \in \ModAll{T}$ and $\I' \in \ModAll{\Sigma} \setminus \ModAll{T}$. By definition, for every $\K \subseteq \mathbb{N}$, $\I \in \ModAll{\rho^\K(T)}$. Moreover, by both conditions of relaxations, there exists $\K' \subseteq \mathbb{N}$ such that $\I' \in \ModAll{\rho^{\K'}(T)}$ and for every $\K''$, $0 \leq \min_{\K''}  \sum_{k\in \K''} <  \sum_{k\in \K'} $, $\I' {\not \in} Mod(\rho^{\K''}(T))$. Hence, $\I' {\not \preceq}_T \I$ \fbox{Check}. 

It remains to show that 
%$\ModAll{T \circ T'} = \min(\ModAll{T'} \setminus \{\I \in \ModAll{\Sigma} \mid \I^* = \Sen{\Sigma}\},\preceq_T)$
$\ModAll{T \circ T'} = \min(\ModAll{T'} \setminus M^*,\preceq_T)$. To simplify the proof, we suppose that $T \circ T' = \rho^\K(T) \cup T'$ (i.e. if $T = T_1 \coprod T_2$, then  $T_2 = \emptyset$), the more general case where $T_2 \neq \emptyset$ being easily obtained from this more simple case.
%$\ModAll{T \circ T'} \subseteq \min(\ModAll{T'} \setminus \{\I \in \ModAll{\Sigma} \mid \I^* = \Sen{\Sigma}\},\preceq_T)$. 
%\begin{itemize}

(i) Let $\I \in \ModAll{T \circ T'}$. By definition of $\circ$, there exists a set $\K \subset \mathbb{N}$ such that $\I \in \ModAll{\rho^{\K}(T) \cup T'}$, and then $\I  \in \ModAll{T'}$. Let $\I' \in \ModAll{T'}$. If $\I' \in \ModAll{\rho^\K(T)}$, then $\I {\not \preceq}_T  \I'$ and $\I' {\not \preceq}_T  \I$. Otherwise $\I' \not\in \ModAll{\rho^\K(T)}$ and $\forall \K'$ such that $\I' \in \ModAll{\rho^{\K'}(T)}$ we have $\sum_{k\in K'}  k\geq \sum_{k\in K} k$. Then $\min_{\mathcal{K'} \mid \mathcal{I'} \in \ModAll{\rho^\mathcal{K'}(T)}}  \sum_{k \in \K'} \geq \sum_{k \in \K} k$, which implies  $\I \preceq_T \I'$. We can hence conclude that 
%$\I \in \min(\ModAll{T'} \setminus \{\I \in \ModAll{\Sigma} \mid \I^* = \Sen{\Sigma}\},\preceq_T)$
$\I \in \min(\ModAll{T'} \setminus M^*,\preceq_T)$.
%$\min(\ModAll{T'} \setminus \{\I \in \ModAll{\Sigma} \mid \I^* = \Sen{\Sigma}\},\preceq_T) \subseteq \ModAll{T \circ T'}$.

(ii) Conversely, let 
%$\I \in \min(\ModAll{T'} \setminus \{\I \in \ModAll{\Sigma} \mid \I^* = \Sen{\Sigma}\},\preceq_T)$
$\I \in \min(\ModAll{T'} \setminus M^*,\preceq_T)$. By definition of $\circ$, there exists 
%a least $\K \subset \mathbb{N}$ 
a set $\K$ of minimal sum such that $\rho^\K(T) \cup T'$ is consistent and $T \circ T' = \rho^\K(T) \cup T'$. As 
%$\I \in  \min(\ModAll{T'} \setminus \{\I \in \ModAll{\Sigma} \mid \I^* = \Sen{\Sigma}\},\preceq_T)$
$\I \in  \min(\ModAll{T'} \setminus M^*,\preceq_T)$, this means that, for every $\I' \in \ModAll{\rho^\K(T)\cup T'}$,  $\I \preceq_T \I'$, and then $\I \in \ModAll{\rho^\K(T)\cup T'} = \I \in \ModAll{T \circ T'}$.  
%\end{itemize}
\end{proof}

\begin{proposition}
The revision in Definition~\ref{revisionrelaxation} satisfies the {\bf Relevance} minimality criterion.
\end{proposition}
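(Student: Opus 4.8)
The plan is to verify the two consistency requirements of \textbf{(Relevance)} for a single, carefully chosen witness $X$. Throughout I identify the old belief with $T = T_1$ and the new one with $T' = T_2$, so that by Definition~\ref{revisionrelaxation} we have a partition $T = T'_1 \coprod T''_1$, a sum-minimizing set $\K$, and $T \circ T' = \rho^\K(T'_1) \cup T''_1 \cup T'$, which is consistent by construction. The two facts I lean on repeatedly are monotonicity of $\Cn{}$ (so every subset of a consistent set is consistent, and every superset of an inconsistent set is inconsistent) and condition~3 of the definition, namely that any $S$ with $T''_1 \subsetneq S \subseteq T_1$ satisfies $\Cn{S \cup T'} = \Sen{\Sigma}$.

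First I would locate $\varphi$. Since $T''_1 \subseteq T \circ T'$, any $\varphi \in T \setminus (T \circ T')$ cannot lie in $T''_1$, hence $\varphi \in T'_1$; in particular $\varphi \notin T''_1$ and $T'_1 \neq \emptyset$. I then propose the \emph{minimal} admissible witness $X = T \cap (T \circ T')$. The requirement $T \cap (T \circ T') \subseteq X \subseteq T$ is then immediate, so only the two consistency conditions remain to be checked.

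For the first condition I observe that $X \cup T' \subseteq T \circ T'$: indeed $X \subseteq T \circ T'$ by definition of $X$, and $T' \subseteq T \circ T'$. Since $T \circ T'$ is consistent, monotonicity of $\Cn{}$ gives $\Cn{X \cup T'} \neq \Sen{\Sigma}$. For the second condition I note that $T''_1 \subseteq X$ (because $T''_1 \subseteq T$ and $T''_1 \subseteq T \circ T'$) while $\varphi \notin T''_1$, so $T''_1 \subsetneq T''_1 \cup \{\varphi\} \subseteq T_1$. Condition~3 of Definition~\ref{revisionrelaxation} then yields $\Cn{T''_1 \cup \{\varphi\} \cup T'} = \Sen{\Sigma}$, and since $T''_1 \cup \{\varphi\} \cup T' \subseteq X \cup \{\varphi\} \cup T'$, monotonicity lifts this to $\Cn{X \cup \{\varphi\} \cup T'} = \Sen{\Sigma}$, as required.

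The one delicate point is the choice of $X$, and it is where a naive attempt fails: taking $X = T \setminus \{\varphi\}$ does not work, because deleting a single formula from the inconsistent part $T'_1$ generally leaves a set strictly containing $T''_1$, which condition~3 forces to be inconsistent with $T'$, breaking the first requirement. The resolution is to take $X$ as small as the criterion permits, exactly $T \cap (T \circ T')$: it is small enough to embed into the consistent revision $T \circ T'$ (securing condition~1), yet it still contains $T''_1$, so reinstating $\varphi$ pushes it strictly above $T''_1$ and triggers inconsistency (securing condition~2). I expect no further obstacles, as the remaining steps are routine applications of monotonicity of $\Cn{}$.
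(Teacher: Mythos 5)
Your proof is correct and takes essentially the same route as the paper, whose entire proof is the one-liner ``set $X = T''_1$'': your witness $X = T \cap (T \circ T')$ is in fact equal to $T''_1$, since any $\psi \in T'_1 \cap (T \circ T')$ would make $T''_1 \cup \{\psi\} \cup T'$ a consistent subset of the consistent set $T \circ T'$, contradicting condition~3 of Definition~\ref{revisionrelaxation}. Your two verification steps (consistency of $X \cup T'$ because it sits inside $T \circ T'$, and inconsistency of $X \cup \{\varphi\} \cup T'$ via condition~3 applied to $T''_1 \subsetneq T''_1 \cup \{\varphi\}$) are exactly the routine checks the paper leaves implicit.
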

The proof is direct by setting $X = T''_1$.

\section{Concrete theory relaxations in different $\ALC{}$ fragments}
\label{sec:concrete}

%\subsection{The DL case}
In this section, we introduce concrete relaxation operators suited to the syntax of  the logic $\ALC{}$, as defined in Section~\ref{sec:DL}, and its fragments $\EL{}$ and $\ELext{}$. $\EL{}$-concept description constructors are existential restriction ($\exists$), conjunction ($\sqcap$), $\top$ and $\bot$, while $\ELext{}$-concept constructors are those of $\EL{}$ enriched with disjunction ($\sqcup$).

Formulas in DL are of the GCI form: $C\sqsubseteq D$, where $C$ and $D$ are any two complex concepts, or Abox assertions: ($a:C, \langle a,b\rangle : r)$, with $r$ a role.  We propose to define a $\Sigma$-formula relaxation in two ways (other definitions may also exist). For GCIs, a first approach consists in relaxing the set of models of $D$ while another one amounts to  ``retract'' the set of models of $C$.

\begin{definition}
\label{def:formula_dilation}
Let $C$ and $D$ be any two complex concepts defined over the signature $\Sigma$. The \emph{concept relaxation based $\Sigma$-formula relaxation}, denoted $\presuper{r}{\rho_{\Sigma}}$ is defined as follows:
\begin{multline*}
%\begin{aligned}
\presuper{r}{\rho_{\Sigma}}(C\sqsubseteq D) \equiv C \sqsubseteq \rho(D) \\
\presuper{r}{\rho_{\Sigma}}(a: C) \equiv a : \rho(C), 
\; \; \; 
\presuper{r}{\rho_{\Sigma}}(\langle a,b\rangle : r)) \equiv \langle a,b\rangle : r_\top
%\end{aligned}
\end{multline*}   
where $r^\I_\top=\Delta^\I \times \Delta^\I$ and $\rho$ is a concept relaxation as in Definition~\ref{def:ConceptRelaxation}.
\end{definition}

\begin{proposition}
$\presuper{r}{\rho_{\Sigma}}$ is a $\Sigma$-formula relaxation, that is extensive and exhaustive.
\end{proposition}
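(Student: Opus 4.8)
The plan is to verify that the operator $\presuper{r}{\rho_{\Sigma}}$ satisfies the two defining properties of a $\Sigma$-formula relaxation, namely extensivity and exhaustivity, and to do this case by case on the three syntactic forms of a $\Sigma$-sentence (GCIs $C\sqsubseteq D$, concept assertions $a:C$, and role assertions $\langle a,b\rangle:r$). Throughout I would exploit the hypothesis that $\rho$ is a concept relaxation in the sense of Definition~\ref{def:ConceptRelaxation}, so that $C\sqsubseteq\rho(C)$ (extensive) and $\exists k:\top\sqsubseteq\rho^k(C)$ (exhaustive) hold for every concept $C$.

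For extensivity I must show $\ModAll{\varphi}\subseteq\ModAll{\presuper{r}{\rho_{\Sigma}}(\varphi)}$ in each case. First, for a GCI $C\sqsubseteq D$, suppose $\I\models C\sqsubseteq D$, i.e.\ $C^\I\subseteq D^\I$. Since $\rho$ is extensive we have $D^\I\subseteq\rho(D)^\I$, hence $C^\I\subseteq\rho(D)^\I$, which is exactly $\I\models C\sqsubseteq\rho(D)$. Second, for $a:C$, if $\I\models a:C$ then $a^\I\in C^\I\subseteq\rho(C)^\I$ by extensivity of $\rho$, so $\I\models a:\rho(C)$. Third, for $\langle a,b\rangle:r$, the relaxed sentence is $\langle a,b\rangle:r_\top$ where $r_\top^\I=\Delta^\I\times\Delta^\I$; since $(a^\I,b^\I)\in\Delta^\I\times\Delta^\I$ holds trivially in every interpretation, the relaxed role assertion is satisfied by all of $\ModAll{\Sigma}$, so the inclusion holds vacuously.

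For exhaustivity I must produce, for each $\varphi$, a finite $k$ with $\ModAll{\presuper{r}{\rho_{\Sigma}}^k(\varphi)}=\ModAll{\Sigma}$. For $a:C$ the exhaustivity of $\rho$ gives $k$ with $\top\sqsubseteq\rho^k(C)$; iterating the formula relaxation $k$ times yields $a:\rho^k(C)$, and since $\rho^k(C)^\I=\Delta^\I$ the assertion $a^\I\in\rho^k(C)^\I$ holds in every interpretation. For $C\sqsubseteq D$ the same $k$ drawn from $\top\sqsubseteq\rho^k(D)$ gives $\rho^k(D)^\I=\Delta^\I\supseteq C^\I$, so $C\sqsubseteq\rho^k(D)$ is universally satisfied. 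For the role assertion, a single application already lands on $\langle a,b\rangle:r_\top$, which as noted is satisfied everywhere, so $k=1$ suffices. Here one small point of care is that the definition only gives the action of $\presuper{r}{\rho_{\Sigma}}$ on a single application; I would note that iterating it keeps the sentence in the same syntactic shape (the consequent of the GCI, or the concept in the assertion, is simply relaxed again), so $\presuper{r}{\rho_{\Sigma}}^k(C\sqsubseteq D)\equiv C\sqsubseteq\rho^k(D)$ and similarly for $a:C$, which is what makes the argument go through.

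The main obstacle is not any deep calculation but rather pinning down this iteration behaviour cleanly: one must confirm that applying the formula relaxation repeatedly really does compose as $\rho^k$ on the relaxed subconcept, and that the role-assertion case, which collapses to $r_\top$ in one step and thereafter stays at $r_\top$ (equivalently, can be treated as already $\top$-saturated), does not break the uniform choice of $k$. Once I select $k$ as the maximum of the exponents required by the concept-level exhaustivity across the finitely many components and note the role case needs only one step, both properties follow immediately, completing the proof.
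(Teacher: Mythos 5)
Your proof is correct and follows exactly the route the paper intends: the paper's own justification is the one-liner that the claim ``directly follows from the extensivity and exhaustivity of $\rho$,'' and your case analysis over the three sentence forms (GCIs, concept assertions, role assertions), including the observation that iterating the formula relaxation amounts to applying $\rho^k$ to the relaxed subconcept, is just the careful unfolding of that remark.
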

The proof directly follows from the extensivity and exhaustivity of $\rho$.
%\begin{proof}
%The proof of Abox formulas being relaxations follows directly from the definition. Let us concentrate on the GCI form.
%\begin{itemize}
%\item Extensivity: 
%Since $\rho$ is extensive, $\ModAll{D}\subseteq \ModAll{\rho(D)}$. This implies $\left(\ModAll{\Sigma}\setminus \ModAll{C}\right) \cup \ModAll{D}\subseteq \left(\ModAll{\Sigma} \setminus \ModAll{C}\right) \cup \ModAll{\rho(D)}$ and $\ModAll{C\sqsubseteq D} \subseteq \ModAll{C \sqsubseteq \rho(D)}$. Hence $\presuper{r}{\rho_{\Sigma}}$ is extensive.
%\item Non-decreasingness: let $\varphi$ and $\psi$ be two GCIs. By extensivity, we have $Mod(\varphi)\subseteq Mod(\rho_{\Sigma}(\varphi))$ and  $Mod(\psi)\subseteq Mod(\rho_{\Sigma}(\psi))$. Then $Mod(\varphi) \subseteq Mod(\psi)$ implies $Mod(\rho_{\Sigma}(\varphi)) \subseteq Mod(\rho_{\Sigma}(\psi))$.
%The exhaustivity follows directly from the exhaustivity of $\rho$, since it is possible to reach the $\top$ concept after a finite number of relaxations.
%\end{itemize}
%\end{proof}

%%As discussed earlier, another strategy for defining a $\Sigma$-formula relaxation consists in retracting the concept in the left hand side of a GCI. 
%Before providing this definition  we need to formalize this notion of retraction, which could  be seen as an anti-relaxation.
\begin{definition}[Concept Retraction]
\label{concepterosion}
A \emph{(concept) retraction} is an operator $\kappa \colon \C{\Sigma}
  \rightarrow \C{\Sigma}$ that satisfies the following three properties for all
  $C, D \in \C{\Sigma}$.
  \begin{enumerate}
  %  \item $\kappa$ is \emph{non-decreasing}, i.e. $C \sqsubseteq D$ implies  $\kappa(C) \sqsubseteq \kappa(D)$,
    \item $\kappa$ is \emph{anti-extensive}, i.e.\ $\kappa(C) \sqsubseteq C $, and 
    \item $\kappa$ is \emph{exhaustive}, i.e. $\forall D\in \C{\Sigma}, \exists k \in \mathbb{N} \mid \kappa^k(C) \sqsubseteq D$,
  \end{enumerate}
  where $\kappa^k$ denotes $\kappa$ applied $k$ times, and $\kappa^0$ is the
  identity mapping.
\end{definition} 

%Here again, non-decreasingness is not needed in the proofs below, but we keep it for the sake of coherence with relaxation properties. Concept retraction can be seen as an anti-relaxation.

\begin{definition}
\label{def:formula_erosion}
Let $C$ and $D$ be any two complex concepts defined over the signature $\Sigma$. The \emph{concept retraction based $\Sigma$-formula relaxation}, denoted $\presuper{c}{\rho_{\Sigma}}$ is defined as follows:
\[
%\begin{aligned}
\presuper{c}{\rho_{\Sigma}}(C\sqsubseteq D) \equiv \kappa(C) \sqsubseteq D \\
%\presuper{r}{\rho_{\Sigma}}(a: C) &\equiv \top(a)\\
%\presuper{r}{\rho_{\Sigma}}(\langle a,b\rangle : r)) &\equiv \langle a,b\rangle : r_\top
%\end{aligned}
\]   
where $\kappa$ is a concept retraction.% as defined in Definition~\ref{concepterosion}.
\end{definition}
The definition for Abox assertions is similar as in Definition~
\ref{def:formula_dilation}.

\begin{proposition}
$\presuper{c}{\rho_{\Sigma}}$ is a $\Sigma$-formula relaxation..
\end{proposition}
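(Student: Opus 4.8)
The plan is to verify, directly from the two defining properties of the concept retraction $\kappa$ (Definition~\ref{concepterosion}), that $\presuper{c}{\rho_{\Sigma}}$ meets the two requirements of a $\Sigma$-formula relaxation: extensivity and exhaustivity. I would treat GCIs and Abox assertions separately. For the Abox case the work is already done: by the remark following Definition~\ref{def:formula_erosion}, $\presuper{c}{\rho_{\Sigma}}$ acts on assertions exactly as $\presuper{r}{\rho_{\Sigma}}$ does, so both properties are inherited verbatim from the already-established relaxation-based case. Hence the genuine content lies entirely in the GCI clause $\presuper{c}{\rho_{\Sigma}}(C \sqsubseteq D) \equiv \kappa(C) \sqsubseteq D$.

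First I would establish extensivity. Fix a GCI $C \sqsubseteq D$ and take any $\I \in \ModAll{C \sqsubseteq D}$, i.e.\ $C^\I \subseteq D^\I$. Anti-extensivity of $\kappa$ gives $\kappa(C) \sqsubseteq C$, so $\kappa(C)^\I \subseteq C^\I$ for this (indeed every) interpretation. Chaining the two inclusions yields $\kappa(C)^\I \subseteq D^\I$, whence $\I \models \kappa(C) \sqsubseteq D$. Therefore $\ModAll{C \sqsubseteq D} \subseteq \ModAll{\kappa(C) \sqsubseteq D} = \ModAll{\presuper{c}{\rho_{\Sigma}}(C \sqsubseteq D)}$, which is precisely extensivity.

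Next I would prove exhaustivity, the step requiring the most care. Iterating the operator merely iterates $\kappa$ on the left-hand side, so $\presuper{c}{\rho_{\Sigma}}^k(C \sqsubseteq D) \equiv \kappa^k(C) \sqsubseteq D$. The key move is to instantiate the exhaustivity property of $\kappa$ at the \emph{target} concept $D$ itself: by Definition~\ref{concepterosion} there is some $k \in \mathbb{N}$ with $\kappa^k(C) \sqsubseteq D$ holding as a subsumption, i.e.\ $\kappa^k(C)^\I \subseteq D^\I$ for \emph{every} interpretation $\I$. Consequently every $\I$ satisfies $\kappa^k(C) \sqsubseteq D$, so $\ModAll{\presuper{c}{\rho_{\Sigma}}^k(C \sqsubseteq D)} = \ModAll{\Sigma}$, as required.

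The main obstacle is conceptual rather than computational: one must recognize that the universally quantified concept in the retraction's exhaustivity clause should be read as the concept $D$ on the right of the GCI, so that after finitely many applications the relaxed formula becomes a valid (tautological) GCI whose model set is all of $\ModAll{\Sigma}$. Once this instantiation is spotted, both conditions follow immediately, and the Abox clauses inherit them from the relaxation-based case, completing the argument.
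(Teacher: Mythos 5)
Your proof is correct and is precisely the argument the paper leaves implicit: the paper states this proposition without proof (the analogous earlier proposition is dispatched with ``follows directly from the properties of $\rho$''), and your direct verification---chaining anti-extensivity $\kappa(C) \sqsubseteq C$ with $C^\I \subseteq D^\I$ for extensivity, and instantiating the retraction's exhaustivity clause at the target concept $D$ so that $\kappa^k(C) \sqsubseteq D$ becomes a tautological GCI---is exactly the intended reasoning, including the correct reading that Abox assertions are handled as in the relaxation-based case.
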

%The proof is similar as for $\presuper{r}{\rho_{\Sigma}}$.
%\begin{proof}
%The proof of Abox formulae being relaxations follows directly from the definition. Let us concentrate on the GCI form.
%\begin{itemize}
%\item Non-decreasingness: 
%\item Extensivity: since $\kappa$ is anti-extensive,  $\ModAll{\kappa(C)}\subseteq \ModAll{C}$. Then $\ModAll{\Sigma} \setminus \ModAll{C} \subseteq \ModAll{\Sigma} \setminus \ModAll{\kappa(C)}$. This implies $\left(\ModAll{\Sigma} \setminus \ModAll{C}\right) \cup \ModAll{D} \subseteq \left(\ModAll{\Sigma} \setminus \ModAll{\kappa(C)}\right) \cup \ModAll{D}$, and $\ModAll{C\sqsubseteq D} \subseteq \ModAll{\presuper{c}{\rho_{\Sigma}}(C\sqsubseteq D)}$.
%\item Exhaustivity: follows directly from the exhaustivity of $\kappa$, since it is possible to reach the $\bot$ concept after a finite number of retractions.
%\end{itemize}
%\end{proof}

For coming up with revision operators, it remains to define concrete relaxation and retraction operators at the concept level (cf. Figure~\ref{fig:structure}). Some examples of retraction and relaxation operators are given below.
% and a preference relation $\succeq_p$ between two relaxed theories
%\fbox{It remains then to define a preference order}

\subsection{Relaxation and retraction in $\mathcal{EL}$}

\paragraph{$\EL{}$-Concept Retractions.}
A trivial concept retraction is the operator $\kappa_{\bot}$ that maps every concept to $\bot$. This operator is particularly interesting for debugging ontologies expressed in $\cal{EL}$~\cite{schlobach2007debugging}. Let us illustrate this operator through the following example adapted from~\cite{qi2006} to restrict the language to $\cal{EL}$.

\begin{example}
\label{ex:tweety}
{\small
Let $T=\{\textsc{Tweety}\sqsubseteq \textsc{bird}, \textsc{bird}\sqsubseteq \textsc{flies}\}$ and $T'=\{\textsc{Tweety} \sqcap \textsc{flies} \sqsubseteq \bot\}$. Clearly $T \cup T'$ is inconsistent. The retraction-based   $\Sigma$-formula relaxation amounts to apply $\kappa_\bot$ to the concept $\textsc{Tweety}$ resulting in the following new knowledge base $\{\bot \sqsubseteq \textsc{bird}, \textsc{bird}\sqsubseteq \textsc{flies}\}$ which is now consistent with $T'$. An alternative solution is to retract the concept $\textsc{bird}$ in $\textsc{bird}\sqsubseteq \textsc{flies}$ which results in the following knowledge base $\{\textsc{Tweety} \sqsubseteq \textsc{bird}, \bot \sqsubseteq \textsc{flies}\}$ which is also consistent with $T'$. The  sets of minimal sum $\mathcal{K}_1$ and $\mathcal{K}_2$  in Definition~\ref{revisionrelaxation} are $\mathcal{K}_1=\{1,0\}$, (i.e. $k_{\varphi_1}=1, k_{\varphi_2}=0$, where $\varphi_1=\textsc{Tweety}\sqsubseteq \textsc{bird}, \varphi_2= \textsc{bird}\sqsubseteq \textsc{flies}$) and  $\mathcal{K}_2=\{0,1\}$. A good  final solution  could be $T\circ T'=\{\bot \sqsubseteq \textsc{bird}, \textsc{bird}\sqsubseteq \textsc{flies},\textsc{Tweety} \sqcap \textsc{flies} \sqsubseteq \bot \}$ based on an additional preference relation among the solutions defined from the minimality of the ``size'' of the modified concepts.
% but this leads to a meaningless result.
}
\end{example}

\paragraph{$\EL{}$-Concept Relaxations.}
A trivial relaxation is the operator $\rho_\top$ that maps every concept to $\top$. Other non-trivial $\EL{}$-concept description relaxations have been introduced in~\cite{FD:ECAI-14}. We summarize here some of these operators.

\EL{} concept descriptions can appropriately be
represented as labeled trees, often called \emph{\EL{} description trees}
\cite{baader1999computing}. An \EL{} description tree is a tree whose nodes are
labeled with sets of concept names and whose edges are labeled with role names.
An \EL{} concept description
\begin{equation}
  \label{eqn:normalForm}
  C \equiv P_1 \sqcap \cdots \sqcap P_n
  \sqcap \exists r_1. C_1 \sqcap \cdots \sqcap \exists r_m. C_m
\end{equation}
with $P_i \in \NC \cup \{\top\}$, can be translated into a description tree by
labeling the root node $v_0$ with $\{P_1, \dots, P_n\}$, creating an $r_j$
successor, and then proceeding inductively by expanding $C_j$ for the
$r_j$-successor node for all $j \in \{1, \dots, m\}$. 
%As an example the
%description tree of
%\begin{equation}
%  \label{eqn:exampleConcept}
% \textsc{PERSON} \sqcap \exists \mathsf{c}. \textsc{MALE}
%  \sqcap \exists \mathsf{c}. \exists \mathsf{c}. \textsc{FEMALE}.
%\end{equation}
%is depicted in Figure~\ref{fig:ELDescTree}.

An $\EL{}$-concept description relaxation then amounts  to apply simple tree  operations. %namely:
% \emph{addNode}, as well as their inverses, \emph{delLabel} and \emph{delNode}:
%\begin{itemize}
%  \item \emph{delLabel} removes a concept name from a node,
%  \item \emph{delNode} delete an unlabeled leaf node.
%\end{itemize}
Two relaxations can hence be defined~\cite{FD:ECAI-14}: (i) $\rho_\text{depth}$ that
reduces the role depth of each concept by $1$, simply by pruning the
description tree, and (ii) $\rho_\text{leaves}$ that removes all leaves from a description
tree.

 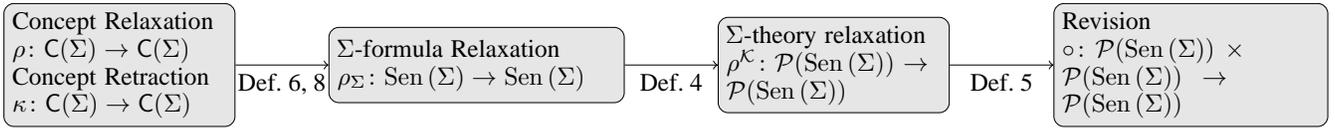
\begin{figure*}[t]
  \begin{center}
    \tikzstyle{block} = [rectangle, draw, fill=black!10,
                         rounded corners, minimum height=3em]
    \resizebox{\linewidth}{!}{
      \begin{tikzpicture}[node distance = 5cm, auto]
        \node[block, text width=3cm] (metric)
          {Concept Relaxation \\ $\rho \colon \C{\Sigma}  \to \C{\Sigma}$\\
          Concept Retraction \\
          $\kappa \colon \C{\Sigma}  \to \C{\Sigma}$};
        \node[block, text width=3.9cm, right of = metric] (dilation)
          {$\Sigma$-formula Relaxation \\ $\rho_\Sigma \colon  \Sen{\Sigma} \to \Sen{\Sigma}$};
        \node[block, text width=3cm, right of = dilation] (relaxation)
          {$\Sigma$-theory relaxation \\ $\rho^{\cal K} \colon \mathcal{P}(\Sen{\Sigma})   \to \mathcal{P}(\Sen{\Sigma})$};
       \node[block, text width=3.6cm, right of = relaxation] (dissimilarity)
          {Revision \\ $\circ \colon \mathcal{P}(\Sen{\Sigma}) \times \mathcal{P}(\Sen{\Sigma})  \to \mathcal{P}(\Sen{\Sigma})$};
        \path[->]
          (metric) edge node[below] { Def.~\ref{def:formula_dilation}, \ref{def:formula_erosion} } (dilation)
          (dilation) edge node[below] {Def.~\ref{def:TR}} (relaxation)
          (relaxation) edge node[below] {Def.~\ref{revisionrelaxation}}
            (dissimilarity);
      \end{tikzpicture}
    }
  \end{center}
  \caption{From concept relaxation and retraction to  revision operators in DL.\label{fig:structure}}
\end{figure*}

\subsection{Relaxations in $\mathcal{ELU}$}
The relaxation defined above exploits the strong property that an $\EL{}$ concept description is isomorphic to a description tree. This is arguably not true for more expressive DLs. Let us try to go a one step further in expressivity and consider the logic $\ELext$. 
A relaxation operator as introduced in~\cite{FD:ECAI-14} requires a concept description to be in a special normal form, called normal form with grouping of existentials, defined recursively as follows.
\begin{definition}
  We say that an $\EL$-concept $D$ is written in \emph{normal
  form with grouping of existential restrictions} if it is of the form
  \begin{equation}
  \label{eqn:normalFormWithGrouping}
    D = \bigsqcap_{A \in N_D} A \sqcap \bigsqcap_{r \in \NR} D_{r},
  \end{equation}
  where $N_D \subseteq \NC$ is a set of concept names and the concepts $D_{r}$
  are of the form
  \begin{equation}
  \label{eqn:groupedExistentials}
    D_{r} = \bigsqcap_{E \in \Cmc_{D_r}} \exists r.E,
  \end{equation}
  where no subsumption relation holds between two distinct conjuncts and
  $\Cmc_{D_r}$ is a set of complex $\EL$-concepts that are themselves in
normal form with grouping of existential restrictions. 
\end{definition}

The purpose of
  $D_r$ \new{terms} is simply to group existential restrictions that share the
  same role name.
  For an \ELext-concept $C$ we say that $C$ is in \emph{normal form} if it is
  of the form ($C\equiv C_1\sqcup C_2\sqcup \cdots \sqcup C_k$) and each of the $C_i$ is an
  \EL-concept in normal form with grouping of existential restrictions.
%\fbox{blabla transition}
\begin{definition}\cite{FD:ECAI-14}
Given an $\ELext$-concept description $C$ we define an operator $\rho_e$
recursively as follows.
For $C = A \in \NC$ and for $C = \top$ we define
  $\rho_e(A) = \rho_e(\top) = \top$.
For $C = D_r$, where $D_r$ is a group of existential restrictions as in
(\ref{eqn:groupedExistentials}), we need to distinguish two cases:
\begin{itemize}
  \item if $D_r \equiv \exists r.\top$ we define $\rho_e(D_r) = \top$, and
  \item if $D_r \not \equiv \exists r.\top$ then we define
$      \rho_e(D_r) = \bigsqcup_{\Smc \subseteq \Cmc_{D_r}}
        \left(
          \bigsqcap_{E \notin \Smc}
            \exists r. E \sqcap
            \exists r.\rho_e \bigg(\bigsqcap_{F \in \Smc} F\bigg)
        \right)$.
\end{itemize}
Notice that in the latter case $\top \notin \Cmc_{D_r}$ since $D_r$ is
in normal form. For $C = D$ as in (\ref{eqn:normalFormWithGrouping}) we define
$  \rho_e(D) = \bigsqcup_{G \in \Cmc_D} \bigg(\rho_e(G) \sqcap
    \bigsqcap_{H \in \Cmc_D \setminus G} H\bigg)$,
where $\Cmc_D = N_D \cup \{D_r \mid r \in \NR\}$. Finally for $C = C_1 \sqcup
C_2 \sqcup \cdots \sqcup C_k$ we set
$  \rho_e(C) = \rho_e(C_1) \sqcup \rho_e(C_2)
    \sqcup \cdots \sqcup \rho_e(C_k)$.
\label{def:relxaDilate}
\end{definition}
 
 The proof of $\rho_e$ being a relaxation, i.e. satisfying exhaustivity and extensivity is detailed in~\cite{FD:TechRep-14}.
 
 Let us illustrate this operator on an example. 
 %\fbox{ne pas oublier de changer le texte :)}
 \begin{example}
{\small 
Suppose  an agent believes that a person $\textsc{Bob}$  is married to a  female judge: $T=\{\textsc{Bob}  \sqsubseteq  \textsc{male} \sqcap \exists.\textsc{MarriedTo}.\left(\textsc{female}\sqcap \textsc{judge}\right) \}$. Suppose now that due to some obscurantist law, it happens that  females are not allowed to be judges. This new belief is captured as $T'=\{\textsc{judge}\sqcap \textsc{female} \sqsubseteq \bot\}$. By applying $\rho_e$ one can resolve the conflict between the two belief sets. To ease the reading, let us rewrite the concepts as follows: $A\equiv \textsc{male}, B\equiv \textsc{female}, C\equiv \textsc{judge}, m\equiv\textsc{MarriedTo}, D \equiv \exists\textsc{MarriedTo}.\left(\textsc{female}\sqcap \textsc{judge}\right)$. Hence $\rho_e(A\sqcap D) \equiv \left(\rho_e(A) \sqcap D\right) \sqcup \left(A \sqcap \rho_e(D)\right) $, with $\rho_e(A)\equiv \top$ and 
 \[
 \begin{split}
 \rho_e(D)\equiv & \exists m.\rho_e(B\sqcap C) \sqcup \left(\exists m.B \sqcap \exists m.\rho_e(C)\right) \sqcup \\& \left( \exists m.\rho_e(B)\sqcap \exists m.C\right)\\
 \equiv&\exists m.(B \sqcup C) \sqcup \left(\exists m.B \sqcap \exists m.\top \right) \sqcup \left( \exists m.\top \sqcap \exists m.C\right)\\
 \equiv   &\exists m.B \sqcup \exists m.C \sqcup \exists m.(B \sqcup C)
 \equiv \exists m.B \sqcup \exists m.C
 \end{split}
 \]
 Then 
  \[
 \begin{split}
\rho_e(A\sqcap D)\equiv & \left(\rho_e(A) \sqcap D\right) \sqcup \left(A \sqcap \rho_e(D)\right) \\
 \equiv&(\top \sqcap D) \sqcup \left(A \sqcap \left(\exists m.B \sqcup \exists m.C\right)\right)\\
 \equiv &  D  \sqcup \left(A \sqcap \left(\exists m.B \sqcup \exists m.C\right)\right)
 \end{split}
 \]
The new agent's belief, up to a rewriting, becomes\\ $\{\textsc{Bob} \sqsubseteq  \exists.\textsc{MarriedTo}.\left(\textsc{female}\sqcap \textsc{judge}\right) \sqcup \left(\textsc{male} \sqcap  \left(\exists \textsc{Married}.\textsc{female} \sqcup \exists \textsc{Married}.\textsc{judge}\right)\right), \\\textsc{judge}\sqcap \textsc{female} \sqsubseteq \bot\}.$
}
 \end{example}
 
 Another possibility for defining a relaxation in $\ELext{}$ is  obtained by exploiting the disjunction constructor by augmenting a concept description with a set of exceptions.
 
 \begin{definition}
 \label{def:rel_ex}
 Given an exception set $\mathcal{E}=\{E_1,\cdots,E_n\}$, we define a relaxation of degree $k$ of an $\ELext{}$-concept description $C$ as follows:
for a finite set $\mathcal{E}^k \subseteq \mathcal{E}$ with $|\mathcal{E}^k|=k$ 
 \[
 \rho^k_{\mathcal{E}}(C)=C\sqcup E_{i_1} \sqcup \cdots \sqcup E_{i_k}, \forall i_j,  E_{i_j} \in \mathcal{E}^k  \text{ and } E_{i_j} \sqcap C \sqsubseteq \bot
 \]
 \end{definition}
Extensivity  of this operator follows directly from the definition. However, exhaustivity is not necessarily satisfied unless the exception set includes the $\top$ concept or the disjunction of some or all of its elements entails the $\top$ concept. 

{\small\it If we consider again Example~\ref{ex:tweety}, a relaxation of the formula $\textsc{bird}\sqsubseteq \textsc{flies}$ using the operator $ \rho^k_{\mathcal{E}}$ over the concept $\textsc{flies}$ with the exception set $\mathcal{E}=\{\textsc{Tweety}\}$ results in the formula $\textsc{bird}\sqsubseteq \textsc{flies}\sqcup \textsc{Tweety}$. The new revised knowledge base is then $\{\textsc{Tweety}\sqsubseteq \textsc{bird}, \textsc{bird}\sqsubseteq \textsc{flies}\sqcup \textsc{Tweety},\textsc{Tweety} \sqcap \textsc{flies} \sqsubseteq \bot\}$ which is consistent.}
 
 Another example involving this relaxation will be discussed in the $\ALC{}$ case (cf. Example~\ref{ex:rich}).
 
 \subsection{Relaxation and retraction in $\ALC{}$}
 We consider here operators suited to $\ALC{}$ language. Of course, all the operators defined for $\EL{}$ and $\ELext{}$ remain valid.
 
 \paragraph{$\ALC{}$-Concept Retractions.}
 A first possibility for defining retraction is to remove iteratively from an $\ALC{}$-concept description one or a set of its subconcepts. A similar construction has been introduced in~\cite{qi2006} by transforming Abox assertions to nominals and conjuncting their negations from the concept they belong to. Interestingly enough, almost all the operators defined in~\cite{qi2006,Gorogiannis2008a} are relaxations. 
 \begin{definition}
 Let $C$ be any $\ALC{}$-concept description, we define 
$ \kappa^n_{\cal E}(C)=C\sqcap  E^c_1 \sqcap \cdots\sqcap  E^c_n \text{ s.t. }E_1\sqsubseteq C, \cdots, E_n \sqsubseteq C$.
 \end{definition}
 % KB. $T=\{\textsc{Tweety}\sqsubseteq \textsc{BIRD}, \textsc{BIRD}\sqsubseteq \textsc{FLIES}\}$ then
% Anti-extensivity follows from the definition.  Exhaustivity is also verified since after $n$ retractions the $\bot$ concept is reached.
  
{\small\it Consider again Example~\ref{ex:tweety}. $ \kappa^1_{\cal E}( \textsc{bird})= \textsc{bird}\sqcap \textsc{Tweety}^c$. The resulting revised knowledge base is then $\{\textsc{Tweety}\sqsubseteq \textsc{bird}, \textsc{bird} \sqcap  \textsc{Tweety}^c\sqsubseteq \textsc{flies},\textsc{Tweety} \sqcap \textsc{flies} \sqsubseteq \bot\}$ which is consistent.}

 Another possibility, suggested in~\cite{Gorogiannis2008a} and related to operators defined in propositional logic as introduced in~\cite{TCIS-02}, consists in applying the retraction at the atomic level. This captures somehow the Dalal's idea of revision operators in propositional logic~\cite{DALA-88}.

 \begin{definition}
 \label{def:dalal-er}
 Let $C$ be an $\ALC{}$-concept description of the form $Q_1r_1\cdots Q_mr_m.D$, where $Q_i$ is a quantifier and $D$ is  quantifier-free and in CNF form, i.e. $D=E_1\sqcap E_2\sqcap\cdots E_n$ with $E_{i}$ being disjunctions of possibly negated atomic concepts. Define, as in the propositional case~\cite{TCIS-02}, $\kappa_p(D)=\bigsqcap_{j=1}^n(\bigsqcup_{i\neq j} E_i)$. Then 
$ \kappa^n_{\text{Dalal}}(C)=Q_1r_1\cdots Q_nr_n.\kappa^n_p(D)$.
 %where $\kappa^n_p(D)$ means applying the the operator $\kappa_n$ $n$ times.
 \end{definition}

%Both exhaustivity and extensivity of this operator are verified. Indeed the operator $\kappa_p$ is exhaustive and if applied $n$ times it reaches the $\bot$ operator (see~\cite{TCIS-02} for properties of this operator). 
 
 This idea can be generalized to consider any retraction defined in $\ELext{}$.
 
  \begin{definition}
 Let $C$ be an $\ALC{}$-concept description of the form $Q_1r_1\cdots Q_mr_m.D$, where $Q_i$ is a quantifier and $D$ is a quantifier-free. Then 
$ \kappa^n_{\cap}(C)=Q_1r_1\cdots Q_mr_m.\kappa^n_{\cal E}(D)$.
 \end{definition}
%The properties  of this operator follows from the ones of $\kappa^n_{\cal E}(D)$. Hence, anti-extensivity is verified but not necessarily exhaustivity. 
 
 Another possible $\ALC{}$-concept description retraction is obtained by substituting  the existential restriction by an universal one. This idea has been sketched in~\cite{Gorogiannis2008a} for defining dilation operators (then by transforming $\forall$ into $\exists$), i.e. special relaxation operators enjoying additional properties~\cite{FD:ECAI-14,FD:ECAI-14}.  We adapt it here to define retraction in DL syntax.
   \begin{definition}
 Let $C$ be an $\ALC{}$-concept description of the form $Q_1r_1\cdots Q_nr_n.D$, where $Q_i$ is a quantifier and $D$ is quantifier-free, then 
 \begin{align*}
 \kappa_{q}(C)=\bigsqcap \{Q'_1r_1\cdots Q'_nr_n.D \mid 
    \exists j\leq n \text{ s.t. } Q_j=\exists \\ \text { and } Q'_j=\forall,  \text{ and for all }  i\leq n  
 \text{ s.t. } i\neq j, Q'_i=Q_i \}
 \end{align*}
 \label{def:retract_q}
 \end{definition}

  \paragraph{$\ALC{}$-Concept Relaxations.}

 Let us now introduce some relaxation operators suited to $\ALC{}$ language.
  \begin{definition}
 Let $C$ be an $\ALC{}$-concept description of the form $Q_1r_1\cdots Q_mr_m.D$, where $Q_i$ is a quantifier and $D$ is quantifier-free and in DNF form, i.e. $D=E_1\sqcup E_2\sqcup\cdots E_n$ with $E_{i}$ being conjunction of possibly negated atomic concepts. Define, as in the propositional case~\cite{TCIS-02}, $\rho_p(D)=\bigsqcup_{j=1}^n(\bigsqcap_{i\neq j} E_i)$, then 
$ \rho^n_{\text{Dalal}}(C)=Q_1r_1\cdots Q_mr_m.\rho_p^n(D)$.
 \end{definition}
% As for the retraction operator introduced in Definition~\ref{def:dalal-er}, both exhaustivity and extensivity of this operator are verified. Indeed the operator $\rho_p$ is exhaustive and if applied $n$ times it reaches the $\top$ operator (see~\cite{TCIS-02} for properties of this operator). 
 
 As for retraction, this idea can be generalized to consider any relaxation defined in $\ELext{}$.
 
  \begin{definition}
 Let $C$ be an $\ALC{}$-concept description of the form $Q_1r_1\cdots Q_nr_n.D$, where $Q_i$ is a quantifier and $D$ is  quantifier-free, then 
$ \rho^n_{\cup}(C)=Q_1r_1\cdots Q_nr_n.\rho^n_{\cal E}(D).$
 \end{definition}
 
  Let us consider another example adapted from the literature to illustrate these operators~\cite{qi2006}.
 \begin{example}
 \label{ex:rich}
{\small\it Let us consider the following knowledge bases: $T=\{\textsc{Bob}\sqsubseteq \forall \textsc{hasChild}.\textsc{rich}, \textsc{Bob}\sqsubseteq \exists  \textsc{hasChild}.\textsc{Mary}, \textsc{Mary}\sqsubseteq \textsc{rich} \}$ and $T'=\{\textsc{Bob}\sqsubseteq \textsc{hasChild}.\textsc{John}, \textsc{John}\sqsubseteq  \textsc{rich}^c\}$. Relaxing  the formula $\textsc{Bob}\sqsubseteq \forall \textsc{hasChild}.\textsc{rich}$ by applying $\rho^n_{\cup}$ to the concept on the right hand side results in the following formula  $\textsc{Bob}\sqsubseteq \forall \textsc{hasChild}.(\textsc{rich}\sqcup \textsc{John})$ which resolves the conflict between the two knowledge bases. }
 \end{example}
 
 A last possibility, dual to the retraction operator given in Definition~\ref{def:retract_q}, consists in transforming universal quantifiers to existential ones.
 
    \begin{definition}
 Let $C$ be an $\ALC{}$-concept description of the form $Q_1r_1\cdots Q_nr_n.D$, where $Q_i$ is a quantifier and $D$ is  quantifier-free, then 
 \begin{align*}
 \rho_{q}(C)=\bigsqcup \{Q'_1r_1\cdots Q'_nr_n.D \mid 
   \exists j\leq n \text{ s.t. } Q_j=\forall \\ \text{ and } Q'_j=\exists, \text{ and for all }  i\leq n  
 \text{ s.t. } i\neq j, Q'_i=Q_i \}
 \end{align*}
 \label{def:relax_q}
 \end{definition}
{\small\it If we consider again Example~\ref{ex:rich}, relaxing the formula $\textsc{Bob}\sqsubseteq \forall \textsc{hasChild}.\textsc{rich}$ by applying $\rho_{q}$ to the concept on the right hand side results in the following formula  $\textsc{Bob}\sqsubseteq \exists \textsc{hasChild}.\textsc{rich}$, which resolves the conflict between the two knowledge bases. }

The following proposition summarizes the properties of the introduced operators.
\begin{proposition}
%~\\\vspace{-.5cm}
%\begin{itemize}
The operators $\rho_\top,  \rho_\text{depth}, \rho_\text{leaves}, \rho_e,\rho_{\text{Dalal}}, \rho_q$ are extensive and exhaustive. 
The operators $ \rho_{\mathcal{E}}, \rho_{\cup}$ are extensive but not exhaustive.
The operators $\kappa_\bot,  \kappa_{\mathcal{E}},\kappa_{\text{Dalal}}, \kappa_{\cap}$ are anti-extensive and exhaustive. 
The operators $\kappa_q$ is anti-extensive but not exhaustive.
%\end{itemize}
\end{proposition}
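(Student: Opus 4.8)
The plan is to organise the verification by the structure of the operators, treating the subsumption properties (extensivity/anti-extensivity) and the iteration property (exhaustivity) separately, since the former are one-line subsumption checks while the latter require an iteration argument. First, the trivial and tree-based operators are immediate. For $\rho_\top$ we have $C \sqsubseteq \top = \rho_\top(C)$ and $\rho_\top(C) = \top$, so exhaustivity holds with $k=1$; dually $\bot = \kappa_\bot(C) \sqsubseteq C$ and $\kappa_\bot(C) = \bot \sqsubseteq D$ for every $D$. For $\rho_\text{depth}$ and $\rho_\text{leaves}$ I would note that pruning an \EL{} description tree only drops structural constraints and hence enlarges the model set (extensivity), while exhaustivity follows from a strictly decreasing well-founded measure — role depth, resp. tree height — that drives the tree to the empty tree, i.e. $\top$; this is the argument of \cite{FD:ECAI-14}, which I would cite rather than reprove, and likewise I would invoke \cite{FD:TechRep-14} for the two properties of $\rho_e$.

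Next, the remaining $\ALC{}$ operators $\rho_{\text{Dalal}}, \kappa_{\text{Dalal}}, \kappa_{\mathcal{E}}, \kappa_{\cap}$ all act on a quantifier-free matrix $D$ under a fixed prefix $Q_1 r_1 \cdots Q_m r_m$, so the uniform strategy is to establish the property on $D$ and then lift it through the prefix. On the matrix the Dalal operators are exactly the propositional morphological dilation $\rho_p$ and erosion $\kappa_p$, whose extensivity/anti-extensivity and saturation (iterating $\rho_p$ reaches $\top$, iterating $\kappa_p$ reaches $\bot$, in a number of steps bounded by the number of atoms) are established in \cite{TCIS-02}; for $\kappa_{\mathcal{E}}$ anti-extensivity is $C \sqcap E_1^c \sqcap \cdots \sqsubseteq C$, and exhaustivity needs only one step, since the single exception $E = C \sqcap D^c \sqsubseteq C$ already gives $\kappa_{\mathcal{E}}(C) \equiv C \sqcap D \sqsubseteq D$. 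Extensivity and anti-extensivity then lift through the prefix by monotonicity of $\forall r.\_$ and $\exists r.\_$.

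The step I expect to be the main obstacle — and the one that discriminates the exhaustive from the non-exhaustive operators — is lifting exhaustivity through the quantifier prefix, together with the swap operators $\rho_q, \kappa_q$. Once the matrix has been driven to $\top$ (for a relaxation) or $\bot$ (for a retraction) one is left with $Q_1 r_1 \cdots Q_m r_m.\top$ or $Q_1 r_1 \cdots Q_m r_m.\bot$, and only the collapses $\forall r.\top \equiv \top$ and $\exists r.\bot \equiv \bot$ push the prefix to the required bound, whereas $\exists r.\top \not\equiv \top$ and $\forall r.\bot \not\equiv \bot$; making this rigorous, and spelling out the standing assumption on role successors under which it goes through, is the crux. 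The non-exhaustive claims are then witnessed directly: for $\rho_{\mathcal{E}}$ (Definition~\ref{def:rel_ex}) a finite exception set whose disjunction does not entail $\top$ leaves $\rho_{\mathcal{E}}^k(C) \equiv C \sqcup E_{i_1} \sqcup \cdots \not\equiv \top$ for all $k$, as already observed after that definition; $\rho_{\cup}$ inherits this through its matrix; and for $\kappa_q$ (Definition~\ref{def:retract_q}) the concept $C = \exists r.A$ yields $\kappa_q(C) = \forall r.A$ with no further $\exists$ to flip, so no iterate falls below an arbitrary $D$.

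Finally, for the extensivity of $\rho_q$ (Definition~\ref{def:relax_q}) the naive check $\forall r.D \sqsubseteq \exists r.D$ fails, and what rescues it is the disjunction over all single flips: given an element satisfying $Q_1 r_1 \cdots \forall r_j \cdots D$, either it has an $r_1$-successor, in which case the disjunct flipping the outermost universal, $\exists r_1.(\cdots D)$, already covers it, or it does not, in which case a flip at an inner position is satisfied vacuously through the retained outer $\forall r_1$; the genuinely fragile case is a single leading universal quantifier, which I would treat explicitly. The anti-extensivity of $\kappa_q$ is the order dual of this analysis, and I would present that combinatorial case distinction in full, as it is where the real content of the proposition lies.
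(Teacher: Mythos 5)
The paper's own ``proof'' of this proposition is a single sentence: the properties are said to be \emph{directly derived from the definitions} and from the properties of $\rho_p$ and $\kappa_p$ in the cited morpho-logic work, plus the closing remark that $\kappa_q$'s missing exhaustivity can be repaired by recursively removing the remaining universal quantifiers. So your route --- trivial operators first, citations for the tree-based operators and $\rho_e$, then a matrix-plus-prefix lifting argument for the Dalal-type operators --- is genuinely different: it is the argument the paper leaves implicit, and your diagnosis that everything hinges on the collapses $\forall r.\top \equiv \top$ and $\exists r.\bot \equiv \bot$ versus the non-collapses $\exists r.\top \not\equiv \top$ and $\forall r.\bot \not\equiv \bot$ is exactly the point the paper silently skips.

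However, the proposal has a genuine gap precisely at what you yourself call the crux, and the partial rescue you offer for $\rho_q$ is incorrect. Your case distinction tacitly assumes that the universal quantifier being flipped is the \emph{outermost} one. Take $C = \exists r_1.\forall r_2.D$: the only disjunct of $\rho_q(C)$ is $\exists r_1.\exists r_2.D$, and an element whose sole $r_1$-successor has no $r_2$-successors satisfies $C$ (the inner $\forall$ holds vacuously at the successor) but not $\rho_q(C)$; there is no outer $\forall r_1$ to provide vacuous satisfaction and no outermost universal to flip. So the fragile case is not only ``a single leading universal quantifier'': extensivity of $\rho_q$ fails whenever a $\forall$ occurs strictly below an $\exists$, and dually for anti-extensivity of $\kappa_q$. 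The same successor problem blocks exhaustivity of $\rho_{\text{Dalal}}$ (prefixes containing $\exists$ get stuck at $\cdots\exists r_j\cdots\top$) and of $\kappa_{\text{Dalal}}$, $\kappa_\cap$ (prefixes containing $\forall$ get stuck at $\cdots\forall r_j\cdots\bot$). These claims are simply false over arbitrary interpretations, so no amount of combinatorial care will prove them; they hold only under the seriality assumption you allude to ($\top \sqsubseteq \exists r.\top$ for every role name), which you never actually state or discharge --- and once it is assumed, $\forall r.E \sqsubseteq \exists r.E$ holds outright, the prefixes collapse, and your entire case distinction becomes unnecessary. A correct write-up must therefore either add seriality as an explicit hypothesis (which the paper also needs but never mentions) or weaken the proposition for the operators $\rho_q$, $\kappa_q$, $\rho_{\text{Dalal}}$, $\kappa_{\text{Dalal}}$ and $\kappa_\cap$; as it stands, your plan defers exactly the step on which the truth of the statement hinges.
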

These properties are directly derived from the definitions and from properties of $\rho_p$ and $\kappa_p$ detailed in~\cite{TCIS-02}. Note that for $\kappa_{q}$ exhaustivity can be obtained by further removing recursively the remaining universal quantifiers and apply at the final step any retraction defined above on the concept $D$.

\section{Related works}
\label{sec:related}
In the last decade, several works have studied revision operators in Description Logics. While most of them concentrated on the adaptation of AGM theory, few works have concerned the definition of concrete operators~\cite{meyer2005knowledge,qi2006revision,qi2006}. A closely related field is inconsistency handling in ontologies (e.g.~\cite{schlobach2003non,schlobach2007debugging}), with the main difference that the rationality of inconsistency  repairing operators is not investigated, as suggested by AGM theory.

Some of our relaxation operators are closely related to the ones introduced in~\cite{qi2006} for knowledge bases revision and in~\cite{Gorogiannis2008a} for merging first-order theories. Our relaxation-based revision framework, being abstract enough (i.e. defined through easily satisfied properties), encompasses these operators.  Moreover,  the revision operator defined in~\cite{qi2006} considers only  inconsistencies due to Abox assertions. Our operators are general in the sense that Abox assertions are handled as any formula of the language. 

 The relaxation idea originates from the work on Morpho-Logics, initially introduced in~\cite{TCIS-02,IB:KR-04}. In this seminal work, revision operators (and explanatory relations) were defined through dilation and erosion operators. These operators share some similarities with relaxation and retraction as defined in this paper. Dilation is a sup-preserving operator and erosion is inf-preserving, hence both are increasing. Some particular dilations and erosions are exhaustive and extensive while relaxation and retraction operators are defined to be exhaustive and extensive but not necessarily sup- and inf-preserving. 
 
 Another contribution in this paper concerns the generalization of AGM postulates and their translation in a model-theoretic writing with a  definition of inconsistency, allowing using them in a wide class of non-classical logics. This follows recent  works on the adaptation of AGM theory (e.g.~\cite{ribeiro2013,ribeiro2014,delgrande2015,flouris2005applying}). Our generalization is closely related to the one recently introduced in~\cite{ribeiro2014} and could be seen as its counterpart in a model-theoretic setting. It also extends the one introduced in~\cite{qi2006}.

\section{Conclusion}
\label{sec:conc}

The contribution of this paper is threefold. First, we
provided a generalization of AGM postulates so as they become applicable to a wide class of non-classical logics. Secondly we proposed a general framework for defining revision operators based on the notion of relaxation. We demonstrated that such a relaxation-based framework for belief revision satisfies the AGM postulates and leads to a faithful assignment. Thirdly, we introduced a bunch of concrete relaxations, discussed their properties and illustrated them through simple examples. Future work will concern the study of  the complexity of these operators,  the comparison of their induced ordering, and their generalization  to other non-classical logics such as Horn logic. 
\paragraph{Acknowledgments.} This work was partially funded by the French ANR project LOGIMA.

%\subsection{The Horn Logic case ?}

%\fbox{
%  \parbox{.9\columnwidth}{  
%  \color{red}{
%  Missing stuffs: 
%  \begin{itemize}
%  \item proofs that all the def are relaxations.
% % \item writing the proof of non-decreasingness.
%  \item Formalizing the notion of preference (section below).
%  \item Other examples to illustrate all the defs.
%  \item Compare ordering - not for this paper.
%  \end{itemize}}
%  }
%  }

\bibliographystyle{named}
\bibliography{DL_revision_MM}

\end{document}